\newtheorem{theorem}{Theorem}
\newtheorem{lemma}{Lemma}
\newtheorem{corollary}{Corollary}
\newcommand{\indicator}{{\mathbf 1}}
\newcommand{\cI}{{\mathcal I}}
\newcommand{\cJ}{{\mathcal J}}
\newcommand{\RNum}[1]{\uppercase\expandafter{\romannumeral #1\relax}}
\title{\large Novel Deviation Bounds for Mixture of Independent Bernoulli Variables with Application to the Missing Mass}
\author{\small Bahman Yari Saeed Khanloo \quad \quad bahman.yari@gmail.com}
\date{}
\begin{document}
 
\maketitle

\begin{abstract}
In this paper, we are concerned with obtaining distribution-free concentration inequalities for mixture of independent Bernoulli 
variables that 
incorporate a notion of variance. 
Missing mass is the total probability mass associated to the outcomes that have not been seen in a given sample 
which is an important quantity that connects density estimates obtained from a sample to the population
for discrete distributions on finite or countably inifinite outcomes. 
Therefore, we are specifically motivated to apply our method to study the concentration of missing mass - which can be expressed as a mixture of Bernoulli - in a novel way. 

We not only derive - for the first time - Bernstein-like large deviation bounds for the missing mass
whose exponents behave almost linearly with respect to deviation size, but also sharpen \cite{McAllester2003} and 
\cite{Berend_Kontorovich_Bound}
\color{black}for large sample sizes \color{black}in the case of small deviations (of the mean error) 
which \color{black}is the most interesting case \color{black}in learning theory.
In the meantime, our approach shows that the heterogeneity issue introduced in \cite{McAllester2003} is 
resolvable in the case of missing mass
in the sense that one can use standard inequalities but it may not lead to strong results.
Finally, we postulate that our results are general and can be applied to provide potentially sharp Bernstein-like bounds under some constraints.
\end{abstract}

\section{Introduction}
In this paper, we are interested in bounding the fluctuations of mixture of independent Bernoulli random variables around their mean under specific constraints.  
That is, we fix some finite or countably infinite set $\mathbb{S}$ and let  
$\{ Y_i: i \in \mathbb{S} \}$ be independent Bernoulli variables with
$P(Y_i = 1) = q_i$ and $P(Y_i = 0) = 1-q_i$. 
Moreover, concerning their weights $\{ w_i: i \in \mathbb{S} \}$, we assume that $w_i
\geq 0$ for all $i \in \mathbb{S}$ and $\sum_{i\in \mathbb{S}} w_i = 1$ almost surely. So, we consider the weighted sum:
\begin{equation}
\label{eq:bernoullisum}
Y := \sum_{i \in \mathbb{S}}  w_i Y_i.
\end{equation}
We restrict our attention to cases where both $w_i$ and $q_i$ depend on a given
parameter $n$ - \color{black}usually to be interpreted as `sample size' \color{black}- and we seek to establish bounds of the form
\begin{align}
\mathbb{P}(Y - \mathbb{E}[Y] \leq -\epsilon) \leq \exp(-n \cdot \eta_l (\epsilon)), \nonumber \\
\mathbb{P}(Y - \mathbb{E}[Y] \geq \epsilon) \leq \exp(-n \cdot \eta_u (\epsilon)), \label{eq:bernoullib}
\end{align}
where $\eta_l(\epsilon)$ and $\eta_u(\epsilon)$ are some increasing functions of $\epsilon$ and where it is desirable to find the largest such functions for variable $Y$ and for
the `target' interval of $\epsilon$.  As we will see later, if the $w_i$ and $q_i$ are related to each other and to $n$
in a `specific' way, then it becomes possible to prove such deviation bounds. Further, we will point out that our results can be extended to the missing mass - which has a
similar representation - through association inequalities.

\paragraph{The Challenge and the Remedy}
\cite{McAllester2003} indicate that for weighted Bernoulli sums of the
form (\ref{eq:bernoullisum}), the standard form of Bernstein's
inequality (\ref{eq:bernsteina}) does not lead to
concentration results of form (\ref{eq:bernsteinb}): at least for the
upper deviation of the missing mass, (\ref{eq:bernsteina}) does not
imply any non-trivial bounds of the form (\ref{eq:bernoullib}).  The
reason is essentially the fact that for the missing mass problem,
the $w_i$ can vary wildly --- some can be of order $O(1/n)$, other
$w_i$ may be constants independent of $n$. For similar reasons, other
standard inequalities such as Bennett's, Angluin-Valiant's and
Hoeffding's cannot be used to get bounds on the missing mass of the
form (\ref{eq:bernoullib}) either. Having pointed out the
inadequacy of these standard inequalities, \cite{McAllester2003} do
succeed in giving bounds of the form (\ref{eq:bernoullib}) on the
missing mass, for a function $\eta(\epsilon) \propto \epsilon^2$, both
with a direct argument and using the Kearns-Saul inequality
(\cite{KearnsS98}).  Recently, the constants appearing in the bounds
were refined by \cite{Berend_Kontorovich_Bound}. The bounds proven by
\cite{McAllester2003} and \cite{Berend_Kontorovich_Bound} are qualitatively
similar to Hoeffding bounds for i.i.d. random variables: they do not
improve the functional form from $n \epsilon^2 $ to $n \epsilon$ for small variances.
This leaves open the question whether it is also possible to derive
bounds which are more similar to the Bernstein bound for i.i.d. random
variables (\ref{eq:bernsteinb}) which does exploit variance. In
this paper, we show that the answer is a qualified yes: we give
bounds that depend on weighted variance $\underline{\sigma}^2$ defined in section \ref{sec:defs} rather than
average variance $\bar{\sigma}^2$ as in (\ref{eq:bernsteinb}) which is tight
exactly in the important case when $\bar{\sigma}^2$ is small, and in
which \color{black}the denominator in (\ref{eq:bernsteinb}) is made smaller by a
factor depending on $\epsilon$\color{black}; in the special case of the missing
mass, this factor turns out to be logarithmic in $\epsilon$ and a free parameter $\gamma$ as it will become clear later.

Finally, we derive - using McDiarmid's inequality and Bernstein's inequality
- novel bounds on missing mass that take into account variance

and demonstrate their superiority for standard deviation (STD) size deviations.
\par The key intuition of our approach is that we construct a random variable that is \color{black}less concentrated \color{black}than our
variable of interest but which itself exhibits high concentration for our target deviation size \color{black}when sample size is large\color{black}.
The proofs for mixture of independent Bernoulli variables and missing mass are almost identical; likewise, independence and 
negative association are equivalent when it comes to concentration thanks to the exponential moment method. Therefore, we will just
state our general results for mixture of independent Bernoulli variables along with the required assumptions in section \ref{sec:main_res} and
focus on elaborating on the details for missing mass throughout the rest of the paper \color{black}treating the mixture variables \color{black}as if the comprising variables were independent.

The remainder of the paper is structured as follows. Section \ref{sec:defs} contains notation, definitions and preliminaries. 
Section \ref{sec:main_res} summarizes our main contributions and outlines our competitive results.
In sections \ref{sec:ud_missing_mass} and \ref{sec:ld_missing_mass} we present the proofs of our upper and lower deviation
bounds respectively. Section \ref{sec:analysis} provides a simple
analysis that allows for comparison of our Bernstein-like bounds for missing mass with the existing bounds for the interesting case of STD-sized deviations.
Finally, we briefly mention future work in section \ref{sec:discussion_futurework}.  

\section{Definitions and Preliminaries}
\label{sec:defs}
Consider a fixed but unknown discrete distribution on some finite or countable set $\cI$ and let $\{ w_i: i \in \cI \}$ be the probability of drawing
the $i$-th outcome (i.e. frequency). Moreover, suppose that we observe an i.i.d sample $\{X_j\}_{j=1}^n$ from this distribution. 
Then, missing mass is defined as the total probability mass corresponding to the outcomes that were not present in the given sample.
So, missing mass is a random variable that can be expressed - similar to (\ref{eq:bernoullisum}) - as the following sum:
\begin{equation}
\label{eq:bernoullisum_mm}
Y = \sum_{i \in \cI}  w_i Y_i,
\end{equation}
where we define each $\{ Y_i: i \in \cI \}$ to be a Bernoulli variable that takes on $0$ if the $i$-th outcome exists in the sample and $1$ otherwise
and where we assume that for all $i \in \cI$, $w_i \geq 0$ and $\sum_{i\in \cI} w_i = 1$ \ with probability one.
Denote $P(Y_i = 1) = q_i$ and $P(Y_i = 0) = 1-q_i$ and recall that we assume that $Y_i$s are independent. Therefore, we will have that $q_i=q_i(w_i)=\mathbb{E}[Y_i]=(1-w_i)^n \leq e^{-n w_i}$
where $q_i \in [0,1]$. 
Namely, defining $f:(1,n) \rightarrow (e^{-n}, \frac{1}{e}) \subset (0,1)$ with $f(a)=e^{-a}$ and $a \in D_f$ and taking say $w_i > \frac{a}{n}$ would amount to $q_i(w_i) \leq f(a)$ (c.f.
condition $(a)$ in Theorem \ref{main_theorem}).
This provides a basis for our `thresholding' technique that we will later employ in our proofs.\\

Choosing the representation (\ref{eq:bernoullisum_mm}) for missing mass, one has
\begin{align}
{\mathbb{E}[Y]}_{\cI}=\sum_{i \in \cI} w_i q_i = \sum_{i \in \cI} w_i(1-w_i)^n, \\
{\mathbb{V}[Y]}_{\cI}=\sum_{i \in \cI} w_i^2 q_i (1-q_i)=\sum_{i \in \cI} w_i^2 (1-w_i)^n \big(1-(1-w_i)^n \big), \\
{\underline{\sigma}^2}_{\cI} \coloneqq \sum_{i \in \cI} w_i \text{\sc var\;}[Y_i] = \sum_{i \in \cI} w_i (1-w_i)^n \big(1-(1-w_i)^n \big),
\end{align}
where we have introduced the weighted variance notation ${\underline{\sigma}^2}$ and where each quantity is attached to a set over which it is defined.

One can define the above quantities not just over the set $\cI$ but on some (proper) subset of it that may depend on or be characterized by some variable(s) of interest.
For instance, in our proofs the variable $a$ may be responsible for choosing $\cI_a \subseteq \cI$ over which the above quantities will
be evaluated. For lower deviation and upper deviation, we
find it convenient to denote the associated set by $\mathcal{L}$ and $\mathcal{U}$ respectively. 
Likewise, we will use subscripts $l$ and $u$ to refer to objects that belong to or characterize lower deviation and upper deviation respectively. 

Finally, other notation or definitions may be introduced within the body of the proof when necessarily or when not clear from the context.\\
We will encounter Lambert $W$-function (also known as product logarithm function) in our derivations which describes the inverse relation of $f(W)=W e^W$ and which
can not be expressed in terms of elementary functions. This function is double-valued when defined on real numbers. However, it becomes invertible in restricted domain. 
The lower branch of it is denoted by $W_{-1}(.)$ which is the only branch that \color{black}will be useful \color{black}to us. 
(See \cite{lambert} for a detailed explanation)

\section{Negative Dependence and Information Monotonicity}
\label{neg_dep-info_monot}
Probabilistic analysis of most random variables and specifically the derivation of the majority of probabilistic bounds rely on independence assumption between variables
which offers considerable simplification and convenience.
Many random variables including the missing mass, however, consist of random components that are not independent.

Fortunately, even in cases where independence does not hold, one can still use many standard tools and methods 
provided variables are dependent in some specific ways.
The following notions of dependence are among the common ways that prove useful in these settings:
negative association and negative regression.

\subsection{Negative Dependence and Chernoff's Exponential Moment Method}
Our proof involves variables with a certain type of dependence i.e. negative association. 
One can deduce concentration of sums of negatively associated random variables 
from the concentration of their independent copies thanks to the exponential moment method 
as we shall elaborate later.
This useful property allows us to treat such variables as independent in the context of 
probability inequalities.

In the sequel, we introduce negative association and regression and supply tools that will be essential in our proofs.

\paragraph{Negative Association: }
Any real-valued random variables $X_1$ and $X_2$ are negatively associated
if
\begin{align}
\mathbb{E}[X_1 X_2] \leq \mathbb{E}[X_1] \cdot \mathbb{E}[X_2].
\end{align}
More generally, a set of random variables $X_1,...,X_m$ are negatively associated if for any disjoint subsets $A$ 
and $B$ of the index set $\{1,...,m\}$, we have
\begin{align}
\mathbb{E}[X_i X_j] \leq \mathbb{E}[X_i] \cdot \mathbb{E}[X_j] \quad \text{for} \quad i \in A, \ j \in B.
\end{align}

\paragraph{Stochastic Domination: }
Assume that $X$ and $Y$ are real-valued random variables. Then, $X$ is said to stochastically dominate $Y$ if 
for all $a$ in the range of $X$ and $Y$ 
we have
\begin{align}
P(X \geq a) \geq P(Y \geq a). \label{stochastic_domination}
\end{align}
We use the notation $X \succeq Y$ to reflect (\ref{stochastic_domination}) in short.
\paragraph{Stochastic Monotonicity: }
A random variable $Y$ is stochastically non-decreasing in random variable $X$ if
\begin{align}
x_1 \leq x_2 \ \Longrightarrow \ P(Y|X=x_1) \leq P(Y|X=x_2). \label{stoch_inc}
\end{align}
Similarly, $Y$ is stochastically non-increasing in $X$ if
\begin{align}
x_1 \leq x_2 \ \Longrightarrow \ P(Y|X=x_1) \geq P(Y|X=x_2). \label{stoch_dec}
\end{align}
The notations $(Y|X=x_1) \preceq (Y|X=x_2)$ and $(Y|X=x_1) \succeq (Y|X=x_2)$ represent the 
above definitions using the notion of stochastic domination. Also, we will use shorthands $Y \uparrow X$ and $Y \downarrow X$ 
to refer to the relations described by (\ref{stoch_inc}) and (\ref{stoch_dec}) respectively.

\paragraph{Negative Regression: }
Random variables $X$ and $Y$ have negative regression dependence relation if $X \downarrow Y$.

\cite{Dubhashi-Ranjan} as well as \cite{kumar_frank} summarize numerous useful properties of 
negative association and negative regression. Specifically, the former provides a proposition that indicates that 
Hoeffding-Chernoff bounds apply to sums of 
negatively associated random variables.
Further, \cite{McAllester2003} generalize these observations to essentially any 
concentration result derived based on the exponential moment method by
drawing a connection between deviation probability of a discrete random variable and Chernoff's entropy of a related distribution. 

We provide a self-standing account and prove some of the important results below. 
Also, we shall develop some tools that will be essential in our proofs.
\begin{lemma} \ {\bf [Binary Stochastic Monotonicity]} \label{lemma:monotone_bernoulli} \upshape
Let $Y$ be a binary random variable (Bernoulli) and let $X$ take on values in a totally ordered set $\mathcal{X}$. Then, one has
\begin{align}
Y \downarrow X \ \Longrightarrow \ X \downarrow Y.
\end{align}
\proof Taking any $x$, we have
\begin{align}
P(Y=1 | \ X \leq x) &\geq \inf_{a \leq x} P(Y=1 | \ X=a) \nonumber \\
		    &\geq \sup_{a > x} P(Y=1 | \ X=a) \nonumber \\
		    &\geq P(Y=1 | \ X>x).
\end{align}
The above argument implies that the random variables $Y$ and $\mathbf{1}_{X>x}$ are negatively associated and since the expression  
$P(X>x| \ Y=1) \leq P(X>x | \ Y=0)$ holds for all $x \in \mathcal{X}$, it follows that 
$X \downarrow Y$. $\quad \blacksquare$
\end{lemma}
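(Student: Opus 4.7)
The plan is to exploit two features of the hypothesis: (i) since $Y$ is Bernoulli, the conditional distribution $P(Y\mid X=x)$ is captured by a single number $p(x):=P(Y=1\mid X=x)$, so $Y\downarrow X$ collapses to the statement that $p$ is a non-increasing function on $\mathcal{X}$; and (ii) since $Y\in\{0,1\}$, proving $X\downarrow Y$ amounts to checking the single inequality $P(X>x\mid Y=0)\geq P(X>x\mid Y=1)$ for every $x\in\mathcal{X}$. So I reduce the lemma to a clean statement about two conditional probabilities.

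First, I would average $p$ over the two complementary events $\{X\leq x\}$ and $\{X>x\}$. Using the law of total probability and the monotonicity of $p$,
\begin{align*}
P(Y=1\mid X\leq x) \;=\; \frac{\mathbb{E}[p(X)\,\mathbf{1}_{X\leq x}]}{P(X\leq x)} \;\geq\; \inf_{a\leq x} p(a) \;\geq\; \sup_{a> x} p(a) \;\geq\; P(Y=1\mid X>x),
\end{align*}
whenever both conditioning events have positive probability (the other cases are trivial). In other words, knowing that $X$ is small can only make $Y=1$ more likely than knowing $X$ is large.

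Next, I would translate this into a negative-association statement between the two binary random variables $Y$ and $\mathbf{1}_{X>x}$. The inequality above gives $P(Y=1,\,X>x)\leq P(Y=1)\,P(X>x)$, hence $P(X>x\mid Y=1)\leq P(X>x)$. Subtracting from $P(X>x)$ yields $P(X>x\mid Y=0)\geq P(X>x)\geq P(X>x\mid Y=1)$, which is exactly the required stochastic comparison between the conditional distributions of $X$ given $Y=0$ and $Y=1$. Since $x$ was arbitrary, this proves $X\downarrow Y$.

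The only genuine obstacle is the measure-theoretic handling of a general totally ordered $\mathcal{X}$: the representation $P(Y=1\mid X\leq x)=\mathbb{E}[p(X)\mathbf{1}_{X\leq x}]/P(X\leq x)$ requires a regular conditional probability $p(x)$ for $Y$ given $X$, which exists because $Y$ is binary (or, more generally, Polish-valued). One should also dispose of the degenerate cases where $P(X\leq x)=0$, $P(X>x)=0$, or $\{Y=1\}$ is null: in each the claim $P(X>x\mid Y=1)\leq P(X>x\mid Y=0)$ either holds trivially or is vacuous. Once those are handled, the rest is essentially a one-line computation.
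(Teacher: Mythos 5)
Your proof is correct and follows essentially the same route as the paper: the identical chain $P(Y=1\mid X\leq x)\geq \inf_{a\leq x}p(a)\geq \sup_{a>x}p(a)\geq P(Y=1\mid X>x)$, followed by the translation into $P(X>x\mid Y=1)\leq P(X>x\mid Y=0)$. You merely make explicit the covariance step ($P(Y=1,X>x)\leq P(Y=1)P(X>x)$ and the subtraction argument) that the paper asserts via negative association of $Y$ and $\mathbf{1}_{X>x}$, and you handle the degenerate conditioning events, which is a welcome addition.
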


\begin{lemma} \ {\bf [Independent Binary Negative Regression]} \label{lemma:indep_bin_neg_reg} \upshape
Let $X_1,...,X_m$ be negatively associated random variables and $Y_1,...,Y_m$ be binary random 
variables (Bernoulli) such that either $Y_i \downarrow X_i$ or $Y_i \uparrow X_i$ holds for all $i \in \{1,...,m\}$.
Then $Y_1,...,Y_m$ are negatively associated.
\proof For any disjoint subsets $A$ and $B$ of $\{1,...,m\}$, taking $i \in A$ and $j \in B$ we have
\begin{align}
\mathbb{E}[Y_i Y_j] &= \mathbb{E} \big[ \mathbb{E}[Y_i Y_j| X_1,...,X_m] \big]  \\
&= \mathbb{E} \big[ \mathbb{E}[Y_i|X_i] \cdot \mathbb{E}[Y_j|X_j] \big]  \label{indep} \\
&\leq \mathbb{E} \big[ \mathbb{E}[Y_i|X_i] \big] \cdot \mathbb{E} \big[ \mathbb{E}[Y_j|X_j] \big] \label{neg_ass_ass} \\
& = \mathbb{E}[Y_i] \cdot \mathbb{E}[Y_j].
\end{align}
Here, (\ref{indep}) holds since each $Y_i$ only depends on $X_i$ (independence) and (\ref{neg_ass_ass})
follows because $X_i$ and $X_j$ are negatively associated and we 
have $\mathbb{E}[Y_i|X_i] = P(Y_i|X_i)$. $\blacksquare$
\end{lemma}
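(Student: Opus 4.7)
The plan is to carry out the standard ``conditioning-then-monotonicity'' reduction that is used to lift a negative dependence property from one layer of random variables to another. By the paper's definition of negative association, it suffices to establish $\mathbb{E}[Y_i Y_j] \le \mathbb{E}[Y_i]\,\mathbb{E}[Y_j]$ for every pair of indices chosen from disjoint subsets $A, B \subseteq \{1,\ldots,m\}$.

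My first step would be to make explicit the modelling assumption that each $Y_i$ depends on the $X$'s only through $X_i$, so that $Y_1,\ldots,Y_m$ are conditionally independent given $(X_1,\ldots,X_m)$. The tower property combined with this conditional factorisation then produces
\begin{equation*}
\mathbb{E}[Y_i Y_j] \;=\; \mathbb{E}\bigl[\,\mathbb{E}[Y_i \mid X_i]\,\mathbb{E}[Y_j \mid X_j]\,\bigr].
\end{equation*}
Because $Y_i$ is Bernoulli, the inner expectation is simply $P(Y_i = 1 \mid X_i)$, which, by the hypothesis $Y_i \downarrow X_i$ or $Y_i \uparrow X_i$ together with Lemma~\ref{lemma:monotone_bernoulli}, is a monotone function of $X_i$; the same conclusion holds for the index $j$.

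My second step would be to invoke the negative association of $X_1,\ldots,X_m$ at the level of these monotone coordinate functions. When the directions agree across $i$ and $j$, NA applied to the two coordinatewise monotone functions $P(Y_i = 1 \mid \cdot)$ and $P(Y_j = 1 \mid \cdot)$ directly gives
\begin{equation*}
\mathbb{E}\bigl[P(Y_i = 1 \mid X_i)\,P(Y_j = 1 \mid X_j)\bigr] \;\le\; \mathbb{E}[Y_i]\,\mathbb{E}[Y_j],
\end{equation*}
which closes the chain.

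The main obstacle I expect is the mixed-direction case, in which $P(Y_i = 1 \mid \cdot)$ is non-increasing while $P(Y_j = 1 \mid \cdot)$ is non-decreasing: here the NA of $X_i, X_j$ does not a priori force a non-positive covariance between one decreasing and one increasing transformation, since NA propagates through comonotone (same-direction) functions. The clean workaround is to use the quantile coupling $Y_i = \mathbf{1}\{U_i \le P(Y_i = 1 \mid X_i)\}$ with i.i.d.\ uniforms $U_i$ independent of the $X$'s and of each other, which rewrites every $Y_i$ as a function of $(X_i, U_i)$ sharing a single direction of dependence, and then to apply the Joag--Dev--Proschan closure to transport NA from the enriched vector $(X_1, U_1, \ldots, X_m, U_m)$ to $Y_1, \ldots, Y_m$. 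If the intended reading of the hypothesis is instead that a single direction is fixed across all indices, the direct NA step suffices and no coupling is needed.
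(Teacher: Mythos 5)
Your main chain is the same as the paper's: condition on $(X_1,\ldots,X_m)$, use the tower property and the conditional independence of the $Y_i$'s given the $X$'s to get $\mathbb{E}[Y_iY_j]=\mathbb{E}\bigl[P(Y_i{=}1\mid X_i)\,P(Y_j{=}1\mid X_j)\bigr]$, and then apply negative association of the $X$'s to these conditional means. You are in fact more careful than the paper, which justifies the NA step only by ``$X_i$ and $X_j$ are negatively associated and $\mathbb{E}[Y_i|X_i]=P(Y_i|X_i)$'' without noting that the step is legitimate only because $P(Y_i{=}1\mid X_i{=}x)$ is a \emph{monotone} function of $x$ (this is immediate from the definition of $Y_i\downarrow X_i$ for Bernoulli $Y_i$; you do not need Lemma~\ref{lemma:monotone_bernoulli} for it) and because NA is preserved under coordinatewise monotone maps \emph{of a common direction}.

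Your worry about the mixed-direction case is not just a technicality: under the per-index reading of the hypothesis the lemma is simply false. Take $X_1$ uniform on $\{-1,1\}$ and $X_2=-X_1$ (an NA pair), $Y_1=\mathbf{1}_{\{X_1=-1\}}$ so that $Y_1\downarrow X_1$, and $Y_2=\mathbf{1}_{\{X_2=1\}}$ so that $Y_2\uparrow X_2$; then $Y_1=Y_2$ almost surely and $\mathbb{E}[Y_1Y_2]=\tfrac12>\tfrac14=\mathbb{E}[Y_1]\mathbb{E}[Y_2]$. Consequently your proposed quantile-coupling repair cannot work, and indeed it does not: writing $Y_i=\mathbf{1}\{U_i\le P(Y_i{=}1\mid X_i)\}$ makes $Y_i$ coordinatewise non-increasing in $(X_i,U_i)$ when $Y_i\downarrow X_i$, but for an index $j$ with $Y_j\uparrow X_j$ the corresponding function is non-decreasing in $X_j$ however you orient $U_j$, so the family still fails to share a single direction and the Joag--Dev--Proschan closure does not apply. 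The only correct resolution is the one you mention in your last sentence: read the hypothesis as a single direction fixed across all indices. That reading is what the paper's proof tacitly assumes and is all that the application requires, since in the missing-mass setting $Y_i\downarrow W_i$ uniformly for all $i$.
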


\begin{lemma} \ {\bf [Chernoff]} \label{lemma:chernoff_technique} \upshape
For any real-valued random variable $X$ with finite mean $\mathbb{E}[X]$, we have the following for any 
tail $\epsilon>0$ where the entropy $S(X,\epsilon)$ is defined as:
\begin{align}
 &DP(X,\epsilon) \leq e^{-S(X,\epsilon)}, \\
 &S(X,\epsilon) =   \sup_{\lambda} \{ \lambda \epsilon - \ln ( Z(X,\lambda) ) \}  \label{entropy},\\
 &Z(X,\lambda)=\mathbb{E}[e^{\lambda X}] \label{partition_func}.
\end{align}
The lemma follows from the observation that for $\lambda \geq 0$ we have the following
\begin{align}
P(X \geq \epsilon) =  P(e^{\lambda X} \geq e^{\lambda \epsilon}) \leq 
\inf_{\lambda} \frac{\mathbb{E}[e^{\lambda X}]}{e^{\lambda \epsilon}} . \label{exp_moment}
\end{align}
This approach is known as {\em exponential moment method} (\cite{chernoff-exp}) because of the inequality in (\ref{exp_moment}).
\end{lemma}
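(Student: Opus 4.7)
The plan is to apply Markov's inequality to the exponentially transformed random variable $e^{\lambda X}$ and then optimize over the tuning parameter $\lambda$. This is the classical Chernoff bounding technique, and the three displayed expressions in the statement are the natural outputs of carrying it through.

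First I would fix an arbitrary $\lambda \geq 0$. Since the map $x \mapsto e^{\lambda x}$ is non-decreasing, the events $\{X \geq \epsilon\}$ and $\{e^{\lambda X} \geq e^{\lambda \epsilon}\}$ coincide, so Markov's inequality applied to the non-negative random variable $e^{\lambda X}$ yields
\begin{align}
P(X \geq \epsilon) \;=\; P(e^{\lambda X} \geq e^{\lambda \epsilon}) \;\leq\; \frac{\mathbb{E}[e^{\lambda X}]}{e^{\lambda \epsilon}} \;=\; e^{-\lambda \epsilon}\, Z(X, \lambda).
\end{align}
Finiteness of $\mathbb{E}[X]$ guarantees that $Z(X,\lambda)$ is well-defined at least at $\lambda = 0$ (where it equals $1$), so the right-hand side is never vacuous.

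Next, because the bound above holds for every $\lambda \geq 0$, I would take the infimum over $\lambda$ and rewrite it in the exponential form
\begin{align}
P(X \geq \epsilon) \;\leq\; \inf_{\lambda \geq 0} \exp\!\bigl(-\lambda \epsilon + \ln Z(X,\lambda)\bigr) \;=\; \exp\!\Bigl(-\sup_{\lambda \geq 0}\bigl\{\lambda \epsilon - \ln Z(X,\lambda)\bigr\}\Bigr),
\end{align}
which is exactly $e^{-S(X,\epsilon)}$ with $S(X,\epsilon)$ as defined in (\ref{entropy}).

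There is no real obstacle in this argument; the only point worth flagging is that the supremum in (\ref{entropy}) should be understood as being over $\lambda \geq 0$, since the step that identifies $\{X \geq \epsilon\}$ with $\{e^{\lambda X} \geq e^{\lambda \epsilon}\}$ requires a non-decreasing transform. Restricting the supremum to $\lambda \geq 0$ does not weaken the bound, since $\lambda = 0$ always returns the trivial value $0$ in the exponent, giving back the tautological bound $P(X \geq \epsilon) \leq 1$. For a lower-tail version one would either run the argument with $\lambda \leq 0$ or, equivalently, apply the same reasoning to $-X$.
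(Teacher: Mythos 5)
Your proposal is correct and follows essentially the same route as the paper: Markov's inequality applied to the monotone transform $e^{\lambda X}$, followed by an infimum over $\lambda \geq 0$ rewritten as $e^{-S(X,\epsilon)}$. You simply make explicit the Markov step and the restriction to $\lambda \geq 0$ that the paper leaves implicit.
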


\begin{lemma} \ {\bf [Negative Association]} \label{lemma:negative_association} \upshape
Deviation probability of sum of a set of negatively associated random variables cannot decrease if independence assumption is imposed.

\proof Let $X_1,...,X_m$ be any set of negatively associated variables. Let $X_1',...,X_m'$ be independent shadow variables, i.e., independent variables such that
$X_i'$ is distributed identically to $X_i$. Let $X=\sum_i^m X_i$ and $X'=\sum_i^m X_i'$. 
For any set of negatively associated 
variables one has $S(X,\epsilon) \geq S(X',\epsilon)$ since:
\begin{align}
Z(X,\lambda) &= \mathbb{E}[e^{\lambda X}] = \mathbb{E}[\prod_i^{m} e^{\lambda X_i}]& \nonumber \\
&\leq \prod_i^{m} \mathbb{E}[e^{\lambda X_i}] = \mathbb{E}[e^{\lambda X'}] = Z(X',\lambda).&
\end{align}
The lemma is due to \cite{McAllester2003} and follows from the definition of 
entropy $S$ given by (\ref{entropy}). $\blacksquare$ \\
This lemma is very useful in the context of probabilistic bounds: it imples that one can treat negatively 
associated variables as if they were independent (\cite{McAllester2003, Dubhashi-Ranjan}).
\end{lemma}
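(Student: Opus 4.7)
The plan is to turn the structural assumption (negative association) into a moment-generating-function inequality, and then pipe that through Chernoff's exponential moment method (Lemma \ref{lemma:chernoff_technique}) to convert it into the desired deviation-probability statement. The core observation is that both $S(X,\epsilon)$ in (\ref{entropy}) and $DP(X,\epsilon) \leq e^{-S(X,\epsilon)}$ are monotone in the MGF $Z(X,\lambda)$: smaller $Z$ gives larger $S$ gives smaller deviation bound. So it suffices to compare $Z(X,\lambda)$ against the MGF $Z(X',\lambda)$ of the independent shadow sum $X' = \sum_i X_i'$ where each $X_i'$ is an independent copy of $X_i$.

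First, I would rewrite $Z(X,\lambda) = \mathbb{E}\bigl[e^{\lambda \sum_i X_i}\bigr] = \mathbb{E}\bigl[\prod_i e^{\lambda X_i}\bigr]$ and, on the shadow side, use independence to get $Z(X',\lambda) = \prod_i \mathbb{E}[e^{\lambda X_i'}] = \prod_i \mathbb{E}[e^{\lambda X_i}]$. Second, for $\lambda \geq 0$ the maps $x \mapsto e^{\lambda x}$ are monotone non-decreasing, so the random variables $\{e^{\lambda X_i}\}_{i=1}^m$ inherit negative association from $\{X_i\}$. Applying the product form of negative association then gives
\begin{align}
\mathbb{E}\Bigl[\prod_i e^{\lambda X_i}\Bigr] \ \leq \ \prod_i \mathbb{E}\bigl[e^{\lambda X_i}\bigr],
\end{align}
i.e.\ $Z(X,\lambda) \leq Z(X',\lambda)$. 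Third, taking $-\ln$, adding $\lambda \epsilon$, and taking the supremum over $\lambda \geq 0$ in (\ref{entropy}) yields $S(X,\epsilon) \geq S(X',\epsilon)$, and Lemma \ref{lemma:chernoff_technique} then gives $DP(X,\epsilon) \leq e^{-S(X,\epsilon)} \leq e^{-S(X',\epsilon)}$, which is the claim (any deviation bound derived via Chernoff from $X'$ also applies to $X$).

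The main subtlety, rather than a real obstacle, is the passage from the pairwise definition of negative association given in the excerpt (covariance inequality between single variables indexed by disjoint $A$ and $B$) to the multiplicative inequality on products of monotone functions actually needed above. Strictly speaking this is the standard stronger form of negative association; I would either invoke it as a well-known closure property of negatively associated families (citing \cite{Dubhashi-Ranjan}) or, if a self-contained treatment is preferred, note that one can extend the pairwise inequality to all disjoint-index products of non-decreasing functions by a routine induction on $m$, conditioning on a single coordinate at each step. Once that closure property is in hand, the rest of the argument is purely mechanical bookkeeping between $Z$, $S$, and $DP$.
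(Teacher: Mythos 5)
Your proposal is correct and follows essentially the same route as the paper: bound $Z(X,\lambda)=\mathbb{E}[\prod_i e^{\lambda X_i}] \leq \prod_i \mathbb{E}[e^{\lambda X_i}] = Z(X',\lambda)$ and conclude $S(X,\epsilon)\geq S(X',\epsilon)$ via the definition of the entropy in (\ref{entropy}). If anything you are more careful than the paper, which asserts the $m$-fold product inequality directly from the pairwise definition of negative association without flagging, as you do, that this requires the standard closure property for products of non-decreasing functions over disjoint index sets.
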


\begin{lemma} \ {\bf [Balls and Bins]} \label{lemma:balls_and_bins} \upshape
Let $\mathbb{S}$ be any sample of $n$ items drawn i.i.d from a fixed distribution on integers $1,...,N$ (bins). Let $C_i$ be the number of times integer $i$
occurs in the sample. The variables $C_1,...,C_N$ are negatively associated.
\proof Let $f$ and $g$ be non-decreasing and non-increasing functions respectively. We have 
\begin{align}
\big( f(x)-f(y) \big) \big( g(x)-g(y) \big) \leq 0. \label{functional}
\end{align}
Further, assume that $X$ is a real-valued random variable and $Y$ 
is an independent shadow variable corresponding to $X$.
Exploiting (\ref{functional}), we obtain
\begin{align}
\mathbb{E}[f(X)g(X)] \leq \mathbb{E}[f(X)] \cdot \mathbb{E}[g(X)], \label{chebychev}
\end{align}
which implies that $f$ and $g$ are negatively associated. Inequality (\ref{chebychev}) is an instance of Chebychev's fundamental {\em association inequality}.

Now, suppose without loss of generality that $N=2$. Let $n$ denote sample size, take $X \in [0,n]$ and consider the following functions
\begin{align}
\left\{
	\begin{array}{ll}
		f(X)=X, \\
		g(X)=n-X,
	\end{array}
\right.
\end{align}
where $n=C_1+C_2$ is the total counts. Since $f$ and $g$ are non-decreasing and non-increasing functions of $X$, choosing $X=f(C_1)=C_1$ we have that
\begin{align}
\mathbb{E}[C_1 \cdot C_2] \leq \mathbb{E}[C_1] \cdot \mathbb{E}[C_2],
\end{align}

which concludes the proof for $N=2$. Now, if we introduce $f(C_i)=C_i$ and $g(C_i)=n-\sum_{j\neq i} C_j$ where $n=\sum_{j=1}^N C_j$, 
for $N>2$ the same argument implies that $C_i$ and $C_j$ are negatively associated for all $j\leq N, \ j \neq i$.
That is to say, any increase in $C_i$ will cause a decrease in some or all of $C_j$ variables with $j \neq i$ and vice versa.
It is easy to verify that the same holds for any disjoint subsets of
the set $\{C_1,...,C_N\}$. $\quad \blacksquare$
\end{lemma}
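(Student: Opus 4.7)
The plan is to build negative association of $(C_1, \ldots, C_N)$ from ball-level indicators rather than attempt a direct Chebyshev-style argument on the counts themselves. For each ball $k \in \{1, \ldots, n\}$ and each bin $i \in \{1, \ldots, N\}$, introduce the indicator $Z_{k,i} = \mathbf{1}[\text{ball } k \text{ lands in bin } i]$, so that $C_i = \sum_{k=1}^{n} Z_{k,i}$. The argument then has three stages: (i) show that for a fixed ball $k$ the 0/1 vector $(Z_{k,1}, \ldots, Z_{k,N})$ is negatively associated; (ii) use the fact that the $n$ ball-vectors are mutually independent to extend negative association to the full array $\{Z_{k,i}\}_{k,i}$; (iii) obtain negative association of $(C_1, \ldots, C_N)$ by applying closure of negative association under coordinate-disjoint non-decreasing maps, taking the $i$-th map to be summation along the $i$-th column.

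The crux is stage (i), sometimes called the \emph{zero-one principle}: because exactly one of $Z_{k,1}, \ldots, Z_{k,N}$ equals one, for any disjoint index sets $A, B$ and coordinate-wise non-decreasing functions $f, g$ one may condition on which of the three events $\{\text{ball in } A\}$, $\{\text{ball in } B\}$, $\{\text{ball in neither}\}$ occurs and verify $\mathbb{E}[f \cdot g] \leq \mathbb{E}[f] \cdot \mathbb{E}[g]$ by a short direct calculation, since disjointness of $A$ and $B$ forces the relevant covariance to be non-positive. Stages (ii) and (iii) are the standard closure properties of negative association, namely closure under independent union and under coordinate-disjoint monotone aggregation, both of which appear in \cite{Dubhashi-Ranjan} and admit short derivations from the definition combined with the tower property of conditional expectation.

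The main obstacle is that the draft proof sketched in the excerpt conflates two distinct assertions: Chebyshev's association inequality between a non-decreasing and a non-increasing function of the same scalar variable yields only a pairwise covariance bound, not the full monotone-function-on-disjoint-subsets condition required by the definition of negative association given earlier in the paper. I would therefore scrap the Chebyshev reduction and replace it with the indicator-based derivation above, which handles $N = 2$, $N > 2$, and arbitrary disjoint subsets of $\{1, \ldots, N\}$ uniformly, rather than attempting to build the general case from the $N = 2$ reduction, which does not suffice because negative association of the full collection is strictly stronger than pairwise negative correlation.
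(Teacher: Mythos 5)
Your proposal is correct, and it takes a genuinely different route from the paper. The paper's own proof is exactly the Chebyshev-style argument you chose to discard: it applies the association inequality $\mathbb{E}[f(X)g(X)] \leq \mathbb{E}[f(X)]\,\mathbb{E}[g(X)]$ with $f(X)=X$, $g(X)=n-X$ and $X=C_1$ to get $\mathbb{E}[C_1C_2]\leq \mathbb{E}[C_1]\mathbb{E}[C_2]$ when $N=2$ (where $C_2=n-C_1$ holds deterministically), and then asserts that ``the same argument'' handles $N>2$ and arbitrary disjoint subsets. Your diagnosis of that route is accurate on two counts: first, for $N>2$ the variable $C_j$ is no longer a deterministic monotone function of $C_i$, so the one-variable Chebyshev inequality does not directly apply and the extension is asserted rather than proved; second, even where it applies it only delivers pairwise negative covariance, whereas the negative-association property actually used downstream (e.g.\ the factorization $\mathbb{E}[\prod_i e^{\lambda X_i}]\leq \prod_i\mathbb{E}[e^{\lambda X_i}]$ in the Chernoff argument of Lemma \ref{lemma:negative_association}) requires the full condition $\mathbb{E}[f(X_A)g(X_B)]\leq\mathbb{E}[f(X_A)]\,\mathbb{E}[g(X_B)]$ for all coordinate-wise non-decreasing $f,g$ on disjoint index sets. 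Your three-stage argument --- the zero-one principle for a single ball's indicator vector $(Z_{k,1},\ldots,Z_{k,N})$ (note the quickest version: normalize so $f(\mathbf{0})=g(\mathbf{0})=0$, then $f(Z_A)g(Z_B)=0$ almost surely because at most one coordinate is nonzero and $A\cap B=\emptyset$, while $f,g\geq 0$), closure under independent union across balls, and closure under non-decreasing maps of disjoint coordinate blocks ($C_i=\sum_k Z_{k,i}$) --- is the standard Joag-Dev--Proschan/Dubhashi--Ranjan proof and establishes the strong form of the conclusion uniformly in $N$ and for arbitrary disjoint subsets. What it costs is reliance on the two closure properties (the paper's Lemmas \ref{lemma:monotone} and \ref{lemma:union} supply versions of these, though the monotonicity lemma there is stated for functions of single coordinates and you need the disjoint-block form); what it buys is a proof that actually matches the strength of the property the rest of the paper consumes.
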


\begin{lemma} \ {\bf [Monotonicity]} \label{lemma:monotone} \upshape
For any negatively associated random variables $X_1,...,X_m$ and any non-decreasing functions $f_1,...,f_m$,
we have that $f_1(X_1),...,f_m(X_m)$ are negatively associated. The same holds if the functions $f_1,...,f_m$ were non-increasing. 

\textbf{Remark: } The proof is in the same spirit as that of association inequality (\ref{chebychev})
and motivated by composition rules for monotonic functions that one can repeatedly apply to (\ref{functional}).
\end{lemma}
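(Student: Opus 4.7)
The plan is to follow the hint in the remark and mimic the proof of the association inequality (\ref{chebychev}). It suffices to verify, for arbitrary disjoint $A, B \subseteq \{1,\ldots,m\}$ and any $i \in A$, $j \in B$, that $\mathbb{E}[f_i(X_i) f_j(X_j)] \leq \mathbb{E}[f_i(X_i)] \mathbb{E}[f_j(X_j)]$; the non-increasing case then reduces to the non-decreasing case because replacing each $f_k$ by $-f_k$ leaves the pairwise covariance condition invariant.

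The main work is to reduce the inequality to an indicator-level statement. I would apply the layer-cake representation
\[
 f_k(X_k) \;=\; f_k(c) \,+\, \int_c^{\infty} \mathbf{1}\{X_k > t\}\, df_k(t) \,-\, \int_{-\infty}^{c} \mathbf{1}\{X_k \leq t\}\, df_k(t)
\]
to both $k=i$ and $k=j$, expand the product $f_i(X_i) f_j(X_j)$, and take expectations. Because $df_i$ and $df_j$ are non-negative measures (as $f_i, f_j$ are non-decreasing), the resulting identity expresses $\mathrm{Cov}(f_i(X_i), f_j(X_j))$ as a non-negatively weighted double integral of $\mathrm{Cov}(\mathbf{1}\{X_i > s\}, \mathbf{1}\{X_j > t\})$ over the two thresholds. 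Hence it is enough to show that the covariance of any such pair of binary indicators is non-positive, which is a direct application of (\ref{functional})--(\ref{chebychev}) --- the ``composition rule'' referenced in the remark --- to the non-decreasing maps $x \mapsto \mathbf{1}\{x > s\}$ and $x \mapsto \mathbf{1}\{x > t\}$.

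I expect the main obstacle to be bridging the pairwise covariance definition of negative association used in Section~\ref{neg_dep-info_monot} with the indicator-level step above: pairwise negative covariance of $(X_i, X_j)$ does not by itself guarantee negative covariance of arbitrary non-decreasing functions of the two coordinates. A fully rigorous argument therefore has to either invoke the stronger Joag-Dev--Proschan formulation of negative association --- under which the indicator step is immediate --- or leverage the fact that the $X_k$'s relevant to this paper (bin counts, Bernoulli indicators, and their monotone transforms from Lemmas~\ref{lemma:monotone_bernoulli}--\ref{lemma:balls_and_bins}) already satisfy that stronger formulation. Once this matching of notions is settled, the rest is a clean iteration of (\ref{functional})--(\ref{chebychev}) over every pair $(i, j)$, yielding negative association of $f_1(X_1), \ldots, f_m(X_m)$ for all disjoint $A, B$.
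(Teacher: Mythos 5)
The paper offers no proof of this lemma at all --- only the one-sentence remark --- so your write-up already goes beyond what the paper provides, and your overall strategy is the standard correct one: Hoeffding's covariance identity (your layer-cake expansion) expresses $\mathrm{Cov}\bigl(f_i(X_i),f_j(X_j)\bigr)$ as a non-negatively weighted double integral of $\mathrm{Cov}\bigl(\mathbf{1}\{X_i>s\},\mathbf{1}\{X_j>t\}\bigr)$, reducing the lemma to an indicator-level statement. However, the step where you dispose of those indicator covariances is mis-attributed. The inequalities (\ref{functional})--(\ref{chebychev}) constitute Chebyshev's association inequality for two monotone functions of a \emph{single} random variable; they say nothing about the joint law of two \emph{different} coordinates $X_i$ and $X_j$, so they cannot deliver $\mathrm{Cov}\bigl(\mathbf{1}\{X_i>s\},\mathbf{1}\{X_j>t\}\bigr)\leq 0$. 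That non-positivity is precisely negative quadrant dependence of the pair $(X_i,X_j)$ --- i.e.\ the conclusion of the lemma specialized to threshold indicators --- so as written the middle paragraph assumes what is to be proved.

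Your final paragraph then identifies the real issue correctly, and it is worth stating bluntly: under the pairwise-covariance definition of negative association used in Section~\ref{neg_dep-info_monot}, the lemma is false in general, because $\mathbb{E}[X_iX_j]\leq\mathbb{E}[X_i]\,\mathbb{E}[X_j]$ is not preserved under monotone reparametrization of the coordinates. A complete proof must either adopt the stronger Joag-Dev--Proschan formulation of negative association (under which closure under coordinatewise non-decreasing maps is essentially definitional, and your layer-cake reduction becomes a clean proof from negative quadrant dependence), or verify that stronger property directly for the variables actually used in the paper, e.g.\ the multinomial counts of Lemma~\ref{lemma:balls_and_bins}. So your proposal's diagnosis is right and its architecture is sound; the fix is to drop the claim that the indicator step follows from (\ref{chebychev}) and instead make the strengthened hypothesis explicit.
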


\begin{lemma} \ {\bf [Union]} \label{lemma:union} \upshape
The union of independent sets of negatively associated random variables yields a set of negatively associated random variables.

Suppose that $X$ and $Y$ are independent vectors each of which comprising a negatively associated set. 
Then, the concatenated vector $[X,Y]$ is negatively associated.
\proof Let $[X_1, X_2]$ and $[Y_1, Y_2]$ be some arbitrary partitions of $X$ and $Y$ respectively and assume that
$f$ and $g$ are non-decreasing functions. 

Then, one has
\begin{align}
\mathbb{E}[f(X_1, Y_1)  g(X_2, Y_2) ]= \nonumber \\
\mathbb{E} \big[ \mathbb{E} [f(X_1, Y_1)  g(X_2, Y_2) | \ Y_1,Y_2] \big] \leq \nonumber \\
\mathbb{E}[\mathbb{E} [f(X_1, Y_1) | \ Y_1 ] \mathbb{E} [g(X_2, Y_2) | \ Y_2 ]] \leq \nonumber \\
\mathbb{E}[\mathbb{E} [f(X_1, Y_1) | \ Y_1 ]] \cdot \mathbb{E}[\mathbb{E} [g(X_2, Y_2) | \ Y_2 ]] = \nonumber \\
\mathbb{E}[f(X_1, Y_1)] \cdot \mathbb{E}[g(X_2, Y_2)].
\end{align}
The first inequality is due to independence of $[X_1,X_2]$ from $[Y_1, Y_2]$ which results in negative association being
preserved under conditioning and the second inequality follows because $[Y_1, Y_2]$ are negatively associated (\cite{kumar_frank}). 
The same holds if $f$ and $g$ were non-increasing functions. $\quad \blacksquare$
\end{lemma}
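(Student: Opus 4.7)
The plan is to verify the definition of negative association for the concatenated vector $[X,Y]$ directly: pick arbitrary disjoint index subsets $A$ and $B$ of the combined index set, take (say) non-decreasing test functions $f$ and $g$ acting on the corresponding coordinates, and aim to show $\mathbb{E}[f((X,Y)_A)\, g((X,Y)_B)] \leq \mathbb{E}[f((X,Y)_A)]\cdot \mathbb{E}[g((X,Y)_B)]$. First I would split each index set according to which of the two parent vectors the indices belong to: $A = A_X \sqcup A_Y$ and $B = B_X \sqcup B_Y$. The crucial combinatorial observation is that $A_X \cap B_X = \emptyset$ and $A_Y \cap B_Y = \emptyset$, which is what later lets me invoke negative association within $X$ and within $Y$ separately.

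Next I would condition on $Y$ and use independence. With $Y$ fixed, the mapping $x \mapsto f(x_{A_X}, Y_{A_Y})$ is non-decreasing in each coordinate of $x_{A_X}$ (the $Y$-values act as frozen parameters), and similarly for $g$. Independence of $X$ and $Y$ implies that the conditional law of $X$ given $Y$ equals its marginal, which is itself negatively associated. Combining this with Lemma \ref{lemma:monotone}, applied to the disjoint index sets $A_X$ and $B_X$, yields the almost-sure inequality
\[
\mathbb{E}\bigl[f(X_{A_X}, Y_{A_Y})\, g(X_{B_X}, Y_{B_Y}) \bigm| Y\bigr] \leq \mathbb{E}\bigl[f(X_{A_X}, Y_{A_Y}) \bigm| Y\bigr] \cdot \mathbb{E}\bigl[g(X_{B_X}, Y_{B_Y}) \bigm| Y\bigr].
\]
By independence, the two factors on the right depend only on $Y_{A_Y}$ and $Y_{B_Y}$ respectively; write them $\tilde f(Y_{A_Y})$ and $\tilde g(Y_{B_Y})$.

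Because $f$ and $g$ are non-decreasing, so are $\tilde f$ and $\tilde g$ in each of their coordinates (taking expectations preserves monotonicity in a frozen parameter). Taking total expectations and invoking negative association of $Y$ on the disjoint sets $A_Y$ and $B_Y$, once more via Lemma \ref{lemma:monotone}, gives $\mathbb{E}[\tilde f(Y_{A_Y})\,\tilde g(Y_{B_Y})] \leq \mathbb{E}[\tilde f(Y_{A_Y})]\cdot \mathbb{E}[\tilde g(Y_{B_Y})]$. A final use of independence identifies $\mathbb{E}[\tilde f(Y_{A_Y})]$ with $\mathbb{E}[f(X_{A_X}, Y_{A_Y})]$, and similarly for $g$, closing the chain. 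The non-increasing case follows by the same argument applied to $-f,-g$.

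The main obstacle I anticipate is little more than careful bookkeeping: justifying that monotonicity is preserved when one vector is held fixed and when passing to conditional expectations, and checking the degenerate situations in which, say, $A_Y$ or $B_X$ is empty (then the corresponding function is trivially monotone in the vacuous coordinates). Once these checks are in place, the proof reduces to two nested applications of the definition of negative association, interleaved with the tower property and the independence of $X$ and $Y$.
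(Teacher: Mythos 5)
Your proposal is correct and follows essentially the same route as the paper's proof: condition on $Y$, use independence to reduce the conditional law of $X$ to its (negatively associated) marginal and apply NA of $X$ to the disjoint blocks $A_X, B_X$, then observe that the resulting conditional expectations are monotone functions of disjoint blocks of $Y$ and apply NA of $Y$. Your version merely makes the index-set bookkeeping and the monotonicity of the conditional expectations more explicit than the paper does.
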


\begin{lemma} \ {\bf [Splitting]} \label{lemma:splitting} \upshape
Splitting an arbitrary bin of any fixed discrete distribution yields a
set of negatively associated random bins.

Let $w=(w_1,...,w_m)$ be a discrete distribution and assume without loss of 
generality that 
$w_i$ is an arbitrary bin of $w$ split into $k$ bins $W_{i1},...,W_{ik}$ such 
that $w_i=\sum_{j=1}^k W_{ij}$.
Then, the random variables $W_{i1},...,W_{ik}$ (random bins) are negatively associated.
Clearly, the same argument holds for any $i \in \{1,...,m\}$ as well as any other 
subset of this set.

\textbf{Remark: } The proof is similar to Lemma \ref{lemma:balls_and_bins}
and based on the observation that each split bin $W_{ij} \propto C_{ij}$ is a 
random variable and they sum to a constant value almost surely.
\end{lemma}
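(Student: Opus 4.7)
The plan is to reduce the claim directly to Lemma \ref{lemma:balls_and_bins} and then lift the conclusion via Lemma \ref{lemma:monotone}, exactly as the remark hints. The two structural facts we exploit are: $\sum_{j=1}^{k} W_{ij} = w_i$ almost surely (a deterministic sum constraint), and each $W_{ij}$ is proportional to an integer count $C_{ij}$ of samples falling into sub-bin $(i,j)$ under a balls-and-bins model.

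First, I would construct a refined discrete distribution in which bin $i$ of mass $w_i$ is replaced by $k$ sub-bins with probabilities induced by the split (so the refined distribution has $m+k-1$ bins and the sub-bins of $i$ still sum to $w_i$). Drawing an i.i.d.\ sample of size $n$ from this refined distribution yields a vector of counts that includes $C_{i1},\ldots,C_{ik}$. By Lemma \ref{lemma:balls_and_bins} applied to the refined distribution, the entire count vector is negatively associated. Since negative association is preserved under restriction to any subset of coordinates (taking $A,B$ to be disjoint subsets of $\{(i,1),\ldots,(i,k)\}$ in the defining inequality), the sub-vector $(C_{i1},\ldots,C_{ik})$ is itself negatively associated.

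Second, I would transfer negative association from counts to the sub-bin weights. Since $W_{ij}$ is proportional to $C_{ij}$ through a \emph{deterministic} scalar (namely a constant factor that depends only on $n$ and the fixed refined distribution, not on the other coordinates), the map $C_{ij}\mapsto W_{ij}$ is non-decreasing and involves only the single coordinate $C_{ij}$. Applying Lemma \ref{lemma:monotone} coordinate-wise to $(C_{i1},\ldots,C_{ik})\mapsto (W_{i1},\ldots,W_{ik})$ preserves negative association, which yields the claim.

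The main subtle point is making the proportionality constant genuinely deterministic so that Lemma \ref{lemma:monotone} applies directly. If one were to normalize by the random total $C_i = \sum_j C_{ij}$, the coordinates would be coupled and monotonicity would fail. The cleanest fix is to use the balls-and-bins normalization with the fixed sample size $n$, so that each $W_{ij}$ is literally a deterministic multiple of $C_{ij}$; the sum-to-$w_i$ constraint is then recovered in expectation and can be enforced exactly by a standard conditioning argument if needed. This is precisely the structural analogue of the $N=2$ step in Lemma \ref{lemma:balls_and_bins}, and the extension to arbitrary disjoint subsets of $\{W_{i1},\ldots,W_{ik}\}$ is immediate from the inherited negative association of $(C_{i1},\ldots,C_{ik})$.
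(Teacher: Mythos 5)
Your proposal follows essentially the same route as the paper's own (remark-level) argument: realize the split bins through a refined balls-and-bins model so that Lemma \ref{lemma:balls_and_bins} gives negative association of the sub-bin counts $C_{i1},\ldots,C_{ik}$, restrict to those coordinates, and transfer to the weights via the coordinate-wise non-decreasing map $C_{ij}\mapsto W_{ij}$ using Lemma \ref{lemma:monotone}. You are in fact more explicit than the paper about the one delicate point --- keeping the proportionality constant deterministic (or conditioning on the bin total) rather than dividing by the random sum --- so the proposal is sound and matches the intended proof.
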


\begin{lemma} \ {\bf [Merging]} \label{lemma:merging} \upshape
Merging any subset of bins of a discrete distribution yields negatively associated random bins.
\proof Let $p=(p_1,...,p_N)$ be a discrete distribution and let $\{C_1,...C_i,...,C_j,...,C_k,...,C_l,...,C_N\}$ 
be the set of count variables. Assume without loss of generality that
$\{C_1,...,C_{ij}^M,...,C_{kl}^M,...,C_N\}$ is the merged set of count 
variables where each $C_{uv}^M$ corresponds 
to a merge count random varlable obtained after merging $p_u$ 
through $p_v$ i.e. $C_{uv}^M=\sum_{t=u}^v C_t$.
The rest of the proof concerns negative association of the variables in the induced set which is identical 
to Lemma \ref{lemma:balls_and_bins} applied to the merged set. $\quad \blacksquare$
\end{lemma}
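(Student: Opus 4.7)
The plan is to reduce the merging scenario to the Balls and Bins setting of Lemma \ref{lemma:balls_and_bins}. The key observation is that merging a subset of bins in a discrete distribution simply produces a new, coarser discrete distribution, and the merged counts from i.i.d.\ sampling of the original distribution have exactly the same joint law as the raw counts of an i.i.d.\ sample from the coarsened one. Once this equivalence is made precise, the conclusion drops out immediately from the already-established balls-and-bins result.

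Concretely, I would start from $p = (p_1, \ldots, p_N)$ together with disjoint index subsets $G_1, \ldots, G_M \subseteq \{1, \ldots, N\}$ specifying which bins are to be merged (singletons for the untouched indices). Define $p'_k = \sum_{i \in G_k} p_i$; then $p' = (p'_1, \ldots, p'_M)$ is itself a discrete distribution on $M$ bins. Because the groups $G_k$ are disjoint and each i.i.d.\ draw from $p$ lands in exactly one original bin, each draw lands in exactly one merge group $G_k$ with probability $p'_k$, independently across trials. Hence the merged count vector $(C^M_1, \ldots, C^M_M)$, with $C^M_k := \sum_{i \in G_k} C_i$, has the same joint distribution as the balls-and-bins count vector for $n$ i.i.d.\ trials from $p'$.

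Applying Lemma \ref{lemma:balls_and_bins} to the coarsened distribution $p'$ then directly gives that $(C^M_1, \ldots, C^M_M)$ are negatively associated, which is exactly the conclusion of the lemma.

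The only delicate point is verifying the equivalence in law of the merged counts under $p$ with the raw counts under $p'$; this is the standard grouping property of the multinomial distribution and depends solely on the disjointness of the $G_k$. Note that a direct appeal to Lemma \ref{lemma:monotone} will not work here: although summation is non-decreasing in each argument, the functions implementing the merges act across several indices simultaneously, whereas that lemma requires each $f_i$ to depend on a single $X_i$. This makes the reduction to Balls and Bins the cleanest available route, and no harder step than the grouping identity is anticipated.
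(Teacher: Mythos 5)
Your proposal is correct and follows essentially the same route as the paper: reduce merging to the Balls and Bins lemma applied to the coarsened distribution. You spell out the multinomial grouping identity that the paper leaves implicit, and your side remark about why Lemma \ref{lemma:monotone} does not directly apply (it handles functions of single variables only) is accurate but not needed for the argument.
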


\begin{lemma} \ {\bf [Absorption]} \label{lemma:absorption} \upshape
Absorbing any subset of bins of a discrete distribution yields negatively associated bins.
\proof Let $p=(p_1,...,p_N)$ be a discrete distribution and let $\{C_1,...,C_N\}$ 
be the set of count variables. Assume without loss of generality that
$\{C_1^A,...,C^A_{N-1}\}$ is the absorb-induced set of count 
variables where $p_N$ has been absorbed to produce $p^A_1,...,p^A_{N-1}$ 
where $p^A_i = p_i+\frac{p_N}{N-1}$
for $i=1,...,N-1$ and where $p_N$ is discarded. 
The rest of the proof concerns negative association of the variables in the induced set which is identical 
to Lemma \ref{lemma:balls_and_bins} applied to the absorbed set. Namely, if a set of variables
are negatively associated, adding a constant to each will preserve their negative association. $\quad \blacksquare$
\end{lemma}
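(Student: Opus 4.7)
The plan is to reduce the statement directly to Lemma \ref{lemma:balls_and_bins} (Balls and Bins). First I would formalize the absorption operation: given the discrete distribution $p=(p_1,\ldots,p_N)$ and a subset $A \subsetneq \{1,\ldots,N\}$ of bins designated to be absorbed, define the redistributed distribution $p^A$ on the remaining $N-|A|$ bins by spreading $\sum_{i\in A} p_i$ among them (the specific rule, e.g.\ uniform as in the lemma statement with $|A|=1$, or any other weighting, does not affect the argument). I would then verify that $p^A$ is a bona fide probability distribution, i.e.\ every entry is non-negative and the entries sum to $1$, which is immediate from the construction.

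Next, I would observe that drawing an i.i.d.\ sample of size $n$ from $p^A$ produces exactly the count variables $\{C^A_j\}_{j \notin A}$ associated with the reduced distribution. Lemma \ref{lemma:balls_and_bins} applied to $p^A$, with bin index set $\{1,\ldots,N\} \setminus A$, then directly yields that $\{C^A_j\}_{j \notin A}$ are negatively associated. Since the lemma's conclusion is just negative association of the resulting bins, this already suffices. If one wishes to read the parenthetical remark in the statement literally — that \emph{adding a constant to each of a set of negatively associated variables preserves negative association} — I would supply a one-line justification using Lemma \ref{lemma:monotone}: the maps $x \mapsto x + c_j$ are all non-decreasing, so monotonicity of negative association applies.

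I do not anticipate any substantive obstacle: the content of the lemma is really just that the absorption operation produces a valid discrete distribution, after which Balls and Bins applies verbatim. The only subtlety worth flagging is the minor ambiguity in how the absorbed mass is redistributed; but because Lemma \ref{lemma:balls_and_bins} is distribution-free (it holds for any fixed discrete distribution on finitely many bins), any redistribution rule that keeps $p^A$ a probability distribution works, and the negative association conclusion is insensitive to this choice.
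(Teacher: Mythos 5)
Your proposal is correct and follows essentially the same route as the paper: both reduce absorption to the observation that the absorbed system $p^A$ is itself a valid discrete distribution, so Lemma \ref{lemma:balls_and_bins} applies verbatim to its count variables. Your additional one-line justification of the ``adding a constant preserves negative association'' remark via Lemma \ref{lemma:monotone} is a harmless and sensible elaboration of what the paper leaves implicit.
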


\subsection{Negative Dependence and the Missing Mass}
In the case of missing mass given by (\ref{eq:bernoullisum_mm}), the variables 
$W_i=\frac{C_i}{n}$ are negatively associated owing to Lemma \ref{lemma:balls_and_bins} and linearity of expectation.
Furthermore, each $Y_i$ is negatively associated with $W_i$ and $\forall i: \ Y_i \downarrow W_i$.
Also, $Y_1,...,Y_N$ are negatively associated because they correspond to a set of independent binary variables with negative regression dependence 
(Lemma \ref{lemma:indep_bin_neg_reg}). As a result, concentration variables $Z_1,...,Z_N$ with  
$Z_i=w_i Y_i - \mathbb{E}[w_i Y_i]$ are negatively associated.
 This holds as a consequence of the fact that association inequalities 
 are {\em shift invariant} and for each individual term $w_i Y_i$ we 
 have $W_i Y_i \downarrow W_i$ since $f(w_i)=w_i(1-w_i)^n$ is 
 non-increasing for any $w_i \in (\frac{1}{n+1},1)$. 
Similarly, downward deviation concentration variables $-Z_1,...,-Z_N$ are negatively associated.

\subsection{Information Monotonicity and Partitioning}
\begin{lemma} \ {\bf [Information Monotonicity]} \label{lemma:information_monotone} \upshape
Let $p=(p_1,...,p_t)$ be a discrete probability distribution on $X=(x_1,..,x_t)$ so that $P(X=x_i)=p_i$.
Let us partition X into $m \leq t$ non-empty disjoint groups $G_1,...,G_m$, namely
\begin{align}
X = \cup G_i, \nonumber \\
\forall i \neq j: \ G_i \cap G_j = \emptyset.
\end{align}
This is called {\em coarse binning} since it generates a new distribution with groups $G_i$ whose dimentionality
is less than that of the original distribution. Note that once the distribution is tranformed, 
considering any outcome $x_i$ from the original distribution we will only have access to its group membership information; for instance, we
can observe that it belongs to $G_j$ but we will not be able to recover $p_i$.

Let us denote the induced distribution over the partition $G=(G_1,...,G_m)$ by $p^G=(p_1^G,...,p_m^G)$. Clearly, we have
\begin{align}
p_i^G=P(G_i)=\sum_{j \in G_i} P(x_j).
\end{align}
Now, consider the $f$-divergence $D_f(p^G || q^G)$ between induced probability distributions $p^G$ and $q^G$.
Information monotonicity implies that information is lost as we partition elements of $p$ and $q$ into groups
to produce $p^G$ and $q^G$ respectively. Namely, for any $f$-divergence one has
\begin{align}
D_f(p^G || q^G) \leq D_f(p || q) \label{inf_mono},
\end{align}
which is due to \cite{csiszar_survey, csiszar}.
This inequality is tight if and only if for any outcome $x_i$ and partition $G_j$, we have $p(x_i | G_j) = q(x_i | G_j)$.
\end{lemma}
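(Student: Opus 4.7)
The plan is to prove this by a direct application of Jensen's inequality within each partition group, exploiting the convexity of $f$ which is built into the definition of any $f$-divergence. First I would write out
$$D_f(p || q) = \sum_{i} q_i\, f(p_i/q_i) = \sum_{j=1}^m \sum_{i\in G_j} q_i\, f(p_i/q_i),$$
so the problem reduces to comparing, for each fixed $j$, the single term $q_j^G f(p_j^G/q_j^G)$ on the left-hand side of (\ref{inf_mono}) with the inner sum $\sum_{i\in G_j} q_i f(p_i/q_i)$ on the right-hand side. If this group-by-group bound is established, summing over $j$ immediately collapses to (\ref{inf_mono}).

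The key step is to recognize the identity
$$\frac{p_j^G}{q_j^G} = \sum_{i\in G_j} \frac{q_i}{q_j^G}\cdot \frac{p_i}{q_i},$$
which expresses the aggregated likelihood ratio as a convex combination of the original pointwise likelihood ratios, with nonnegative weights $q_i/q_j^G$ summing to one over $G_j$. Applying Jensen's inequality to $f$ against this convex combination and then multiplying through by $q_j^G$ yields the group-level bound $q_j^G f(p_j^G/q_j^G) \leq \sum_{i\in G_j} q_i f(p_i/q_i)$, which is exactly what is needed.

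For the tightness claim, I would invoke the standard equality case of Jensen's inequality for a strictly convex $f$: the group-$j$ bound is tight iff the ratios $p_i/q_i$ agree as $i$ ranges over $G_j$, i.e.\ iff $p_i/q_i = p_j^G/q_j^G$, which rearranges to $p_i/p_j^G = q_i/q_j^G$ --- precisely the claim $p(x_i \mid G_j) = q(x_i \mid G_j)$ stated in the lemma. The main obstacle is essentially notational rather than conceptual: one must handle groups where $q_j^G = 0$ (so $q_i = 0$ for every $i \in G_j$) via the standard convention $0 \cdot f(0/0) = 0$, together with the matching assumption $p_j^G = 0$, so that neither side of (\ref{inf_mono}) is ill-defined; once these degenerate cases are dispatched and absolute continuity is dealt with, the Jensen argument goes through verbatim.
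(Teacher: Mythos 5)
Your proof is correct. Note that the paper itself does not prove this lemma at all: it simply asserts the inequality (\ref{inf_mono}) and attributes it to \cite{csiszar_survey, csiszar}, so there is no in-paper argument to compare against. What you supply is the standard self-contained proof of information monotonicity under coarse-graining: decompose $D_f(p\|q)$ group by group, write the aggregated likelihood ratio $p_j^G/q_j^G$ as the convex combination $\sum_{i\in G_j}(q_i/q_j^G)(p_i/q_i)$, and apply Jensen's inequality to the convex $f$, then sum over $j$. This is exactly the argument the cited references formalize, and your handling of the degenerate $q_j^G=0$ groups via the usual $0\cdot f(0/0)=0$ convention is the right bookkeeping. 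The one point worth flagging is the equality case: your characterization (constant ratio $p_i/q_i$ within each group, equivalently $p(x_i\mid G_j)=q(x_i\mid G_j)$) requires $f$ to be \emph{strictly} convex, as you note; the lemma's blanket ``if and only if'' for ``any $f$-divergence'' is slightly too strong as stated (e.g.\ for total variation, where $f$ is piecewise linear, equality can hold without the conditionals matching), so your version is in fact more careful than the statement itself.
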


\begin{lemma} \ {\bf [Partitioning]} \label{lemma:partitioning} \upshape
Partitioning bins of any discrete distribution increases deviation probability of the associated discrete random variable.

Formally, assume that $X$ and $X_{\lambda}$ are discrete random variables defined on the set $\mathcal{X}$ endowed with 
probability distributions $p$ and $p_{\lambda}$ respectively.
Further, suppose that $Y$ and $Y_{\lambda}$ are discrete variables on a partition set $\mathcal{Y}$
endowed with $p^G$ and $p_{\lambda}^G$ that are obtained from $p$ and $p_{\lambda}$ by partitioning using some partition $G$. 
Then, we have
\begin{align}
\forall \epsilon>0: \ DP(X, \epsilon) \leq DP(Y,\epsilon).
\end{align}

\proof Let $\lambda(\epsilon)$ be the optimal $\lambda$ in (\ref{entropy}). Then, 
we have
\begin{align}
S(X,\epsilon) &= \epsilon \lambda(\epsilon) - \ln(Z(X,\lambda(\epsilon))) \nonumber \\ 
&= D_{KL} (p_{\lambda(\epsilon)} || \ p) \nonumber \\
& \geq D_{KL} (p_{\lambda(\epsilon)}^G || \ p^G) \nonumber \\
& = S(Y,\epsilon), \label{entropy_ineq}
\end{align}
where we have introduced the $\lambda$-induced distribution
\begin{align}
P_{\lambda}(X=\epsilon) = \frac{e^{\lambda \epsilon} }{Z(X,\lambda)} P(X=\epsilon).
\end{align}
The inequality step in (\ref{entropy_ineq}) follows from (\ref{inf_mono}) and the observation that
$D_{KL}$ is an instance of $f$-divergence 
where $f(v)=v \ln(v)$ with $v \geq 0$. $\quad \blacksquare$
\end{lemma}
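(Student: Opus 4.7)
My plan is to reduce the claim to a comparison of Chernoff entropies $S(X,\epsilon)$ and $S(Y,\epsilon)$ and then import the partition inequality via Lemma \ref{lemma:information_monotone}. By Lemma \ref{lemma:chernoff_technique} the deviation probability is controlled as $DP(X,\epsilon) \leq e^{-S(X,\epsilon)}$, so it suffices to establish the reverse inequality $S(X,\epsilon) \geq S(Y,\epsilon)$ and exponentiate. The natural route is to rewrite each entropy as a Kullback--Leibler divergence from the original distribution, and then observe that coarse binning contracts KL divergence.

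Concretely, I would first introduce the exponentially tilted law $p_\lambda(x) \propto e^{\lambda x} p(x)$ as in the statement. A direct computation gives
\[
D_{KL}(p_\lambda \,\|\, p) \;=\; \lambda \, \mathbb{E}_{p_\lambda}[X] - \ln Z(X,\lambda).
\]
Taking $\lambda(\epsilon)$ to be the optimizer in (\ref{entropy}), the stationarity condition $\partial_\lambda[\lambda\epsilon - \ln Z(X,\lambda)] = 0$ is exactly $\mathbb{E}_{p_{\lambda(\epsilon)}}[X] = \epsilon$. Substituting yields the identification
\[
S(X,\epsilon) \;=\; D_{KL}(p_{\lambda(\epsilon)} \,\|\, p),
\]
which is precisely the $f$-divergence (with $f(v)=v\ln v$) to which the information monotonicity bound applies.

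Second, I would invoke Lemma \ref{lemma:information_monotone} on the pair $(p_{\lambda(\epsilon)}, p)$ under the partition $G$, obtaining
\[
D_{KL}(p_{\lambda(\epsilon)} \,\|\, p) \;\geq\; D_{KL}(p^G_{\lambda(\epsilon)} \,\|\, p^G).
\]
To close the argument I must bound the right-hand side from below by $S(Y,\epsilon)$. The clean way is to verify that partitioning commutes with exponential tilting, i.e.\ that $p^G_{\lambda(\epsilon)}$ is again the $\lambda(\epsilon)$-tilt of $p^G$ with respect to $Y$. Once this commutation holds, $D_{KL}(p^G_{\lambda(\epsilon)} \,\|\, p^G) = \lambda(\epsilon) \mathbb{E}_{p^G_{\lambda(\epsilon)}}[Y] - \ln Z(Y,\lambda(\epsilon))$ is just the Chernoff functional evaluated at a feasible $\lambda$, which is a lower bound on $S(Y,\epsilon) = \sup_\lambda \{\lambda\epsilon - \ln Z(Y,\lambda)\}$.

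The main obstacle is precisely this last identification: the partitioning inequality is only useful if the values $Y$ takes on each group $G_i$ are inherited from $X$ through the partition map in a way that preserves the cumulant generating function so that tilting commutes with coarse-binning. Once this compatibility between the value map and the partition is in place, the chain $S(X,\epsilon) \geq D_{KL}(p^G_{\lambda(\epsilon)} \,\|\, p^G) \geq S(Y,\epsilon)$ is complete, and exponentiating yields $DP(X,\epsilon) \leq DP(Y,\epsilon)$ as required.
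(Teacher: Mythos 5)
Your proposal follows the same route as the paper's proof: identify $S(X,\epsilon)$ with $D_{KL}(p_{\lambda(\epsilon)}\,\|\,p)$ via the stationarity condition $\mathbb{E}_{p_{\lambda(\epsilon)}}[X]=\epsilon$, apply information monotonicity (Lemma \ref{lemma:information_monotone}) to the pair $(p_{\lambda(\epsilon)},p)$ under the partition $G$, and then convert the coarse-grained divergence back into a Chernoff entropy for $Y$. You are in fact more explicit than the paper on two points it leaves implicit: the first-order condition that makes the KL identification valid, and the requirement that exponential tilting commute with coarse-binning so that $p^G_{\lambda(\epsilon)}$ is itself a tilt of $p^G$.

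However, your closing step does not close. Having argued that $D_{KL}(p^G_{\lambda(\epsilon)}\,\|\,p^G)$ equals the Chernoff functional of $Y$ evaluated at the feasible value $\lambda(\epsilon)$, you correctly observe that this is a \emph{lower} bound on the supremum, i.e. $D_{KL}(p^G_{\lambda(\epsilon)}\,\|\,p^G)\leq S(Y,\epsilon)$. But the chain you then assert, $S(X,\epsilon)\geq D_{KL}(p^G_{\lambda(\epsilon)}\,\|\,p^G)\geq S(Y,\epsilon)$, needs the opposite direction in the second link; as written you have $S(X,\epsilon)\geq(\text{something})\leq S(Y,\epsilon)$, which yields no comparison between $S(X,\epsilon)$ and $S(Y,\epsilon)$ at all. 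The paper sidesteps this by asserting the \emph{equality} $D_{KL}(p^G_{\lambda(\epsilon)}\,\|\,p^G)=S(Y,\epsilon)$, which holds only if $\lambda(\epsilon)$ is also the optimal tilt for $Y$, i.e. $\mathbb{E}_{p^G_{\lambda(\epsilon)}}[Y]=\epsilon$ --- exactly the compatibility between the value map and the partition that you name as the ``main obstacle'' but do not verify. So the obstacle you identified is genuine, and your attempt to route around it via feasibility produces an inequality pointing the wrong way: to finish, you must either establish that equality for the partition at hand, or argue differently, e.g. by showing $Z(Y,\lambda)\geq Z(X,\lambda)$ for all relevant $\lambda$, which directly gives $S(X,\epsilon)\geq S(Y,\epsilon)$ without passing through the tilted distributions.
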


\section{Main Results}
\label{sec:main_res}
We prove bounds of the form (\ref{eq:bernoullib}) if $n$, $w_i$ and
$q_i$ are related - as mentioned above - via a function $f$ which is a parameter of the
problem. Our main results are outlined below.

\begin{theorem}
\label{main_theorem}
\upshape
Let $f: (1,n) \rightarrow (0,1)$ be some strictly decreasing function, $a \in D_f$ some threshold variable
and $n> 0$ be a fixed integer. Further, let $q^\circ: (0,1) \rightarrow (0,1)$ be
some function such that for all $i \in \cI$, $q_i = q^\circ(w_i)$ 
and for all $1 < a < n$ and all $0 < w \leq 1$, 
  the condition ``$(a)$: either $w \leq a/n$ or $q^{\circ}(w)
  \leq f(a)$ or both'' holds. \color{black}Moreover, assume that for any $w_1,...,w_t >
0$ with $w= \sum_{i=1}^t w_i$, $q^{\circ}$ is such that the additional condition ``$(b)$: $q(w) \leq \prod_{i=1}^t q(w_i)$'' holds.\color{black}
\begin{enumerate} 
\upshape
\item Suppose that there exists a function $q^\circ$ as
  described above. 
Then, for any $0<\epsilon<1$ we obtain
\begin{align}
\mathbb{P}(Y - {\mathbb E} [Y] \geq \epsilon)
\leq \inf_{\gamma} \left\{ 
\exp\left(- C_1 \cdot \frac{n \epsilon^2 {(\gamma-1)}^2}{\underline{\sigma}^2_{\mathcal{U}} \cdot f^{-1}(\epsilon/\gamma) \cdot \gamma^2}  \right) \right\},
\end{align}
where $C_1$ is a constant and $\gamma \in D_{\gamma}$ is a problem-dependent free parameter which is to be optimized in order
to determine problem-dependent set $\mathcal{U}\subseteq \cI$ as well
as the optimal threshold $a$.
\item Assume that there exists a function $q^\circ$ as
  above. Then, for $0<\epsilon<1$ we have
\begin{align}
\mathbb{P}(Y - {\mathbb E} [Y] \leq -\epsilon)
\leq \inf_{\gamma} \left\{ 
\exp\left(- C_2 \cdot \frac{n \epsilon^2 {(\gamma-1)}^2}{\underline{\sigma}^2_{\mathcal{L}} \cdot f^{-1}(\epsilon/\gamma) \cdot \gamma^2}  \right) \right\},
\end{align}
where $C_2$ is a constant and $\gamma \in D_{\gamma}$ is again a free parameter that determines $\mathcal{L}\subseteq \cI$ and controls thresholding variable $a$.

\end{enumerate}
\end{theorem}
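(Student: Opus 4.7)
The plan is to implement the \emph{thresholding} strategy foreshadowed after the definition of condition (a) in Section \ref{sec:defs}: split the index set $\cI$ into a ``light'' subset on which the weights $w_i$ are small enough for a clean Bernstein-type bound to apply, and a ``heavy'' complement on which condition (a) forces the Bernoulli probabilities $q_i$ themselves to be small, so that the heavy contribution can be controlled separately using condition (b). The free parameter $\gamma>1$ governs how the deviation budget $\epsilon$ is split between the two parts, and is tied to the threshold $a$ via the substitution $a:=f^{-1}(\epsilon/\gamma)$, which is exactly what makes $f^{-1}(\epsilon/\gamma)$ appear in the denominator of the stated exponent.

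Concretely, for the upper deviation I would fix $\gamma>1$, set $a:=f^{-1}(\epsilon/\gamma)$, define $\mathcal{U}:=\{i\in\cI:w_i\le a/n\}$, and decompose $Y=Y_{\mathcal U}+Y_{\mathcal U^c}$. Since $f$ is strictly decreasing, condition (a) gives $q_i\le f(a)=\epsilon/\gamma$ for every $i\in\mathcal U^c$, and a union bound with split $\epsilon=\tfrac{\gamma-1}{\gamma}\epsilon+\tfrac{1}{\gamma}\epsilon$ yields
\begin{equation*}
\mathbb{P}(Y-\mathbb{E}[Y]\ge\epsilon)\le \mathbb{P}\bigl(Y_{\mathcal U}-\mathbb{E}[Y_{\mathcal U}]\ge\tfrac{\gamma-1}{\gamma}\epsilon\bigr)+\mathbb{P}\bigl(Y_{\mathcal U^c}-\mathbb{E}[Y_{\mathcal U^c}]\ge\tfrac{1}{\gamma}\epsilon\bigr).
\end{equation*}
The first, ``light'', term is tailor-made for Bernstein: each summand $w_iY_i$ is bounded by $M=a/n$, and the variance proxy satisfies $\sum_{i\in\mathcal U}w_i^2q_i(1-q_i)\le(a/n)\,\underline{\sigma}^2_{\mathcal U}$. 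Plugging these into the classical Bernstein bound with deviation $(\gamma-1)\epsilon/\gamma$, and absorbing the lower-order Bernstein term $M(\gamma-1)\epsilon/(3\gamma)$ into a universal constant, produces an exponent of the form $C_1\,n\epsilon^2(\gamma-1)^2/\bigl(\underline{\sigma}^2_{\mathcal U}\,f^{-1}(\epsilon/\gamma)\,\gamma^2\bigr)$, exactly matching part 1 of the theorem.

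The main obstacle is the second, ``heavy'', term, since a naive Bernstein fails there: individual weights $w_i$ on $\mathcal U^c$ can be of order $1$. To control it I would exploit condition (b): sub-multiplicativity of $q^\circ$ on sums allows the heavy set to be ``collapsed'' into a single effective index of weight $W_h=\sum_{i\in\mathcal U^c}w_i$ with $q^\circ(W_h)\le\prod_{i\in\mathcal U^c}q_i$, while the cardinality bound $|\mathcal U^c|\le n/a$ combined with $q_i\le\epsilon/\gamma$ drives the probability of a $\ge\epsilon/\gamma$ excursion of $Y_{\mathcal U^c}$ down to an exponential of at least the same order as the Bernstein bound on $\mathcal U$. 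Merging the two contributions and adjusting $C_1$ yields the stated inequality; taking $\inf_{\gamma}$ completes part 1.

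Part 2 (the lower deviation) follows by a strictly parallel argument with $\mathcal L:=\{i:w_i\le a/n\}$ and Bernstein's lower-tail inequality applied to $Y_{\mathcal L}$. The heavy part is actually easier for the lower tail, since $\mathbb{E}[Y_{\mathcal U^c}]-Y_{\mathcal U^c}$ is trivially bounded above by $\mathbb{E}[Y_{\mathcal U^c}]\le(\epsilon/\gamma)\sum_{i\in\mathcal U^c}w_i$, so a direct argument suffices and no appeal to condition (b) is strictly needed. Throughout, the independence assumption enters only through the exponential moment method, so by Lemma \ref{lemma:negative_association} the bounds transfer intact to negatively associated families such as the missing-mass variables discussed in Section \ref{sec:defs}.
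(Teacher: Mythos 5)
Your overall framing (threshold $\tau=a/n$ with $a=f^{-1}(\epsilon/\gamma)$, budget split $\epsilon=\tfrac{\gamma-1}{\gamma}\epsilon+\tfrac{1}{\gamma}\epsilon$, Bernstein on the part with controlled range) matches the paper's, and your lower-tail argument is actually clean and correct: since $Y_{\mathcal U^c}-\mathbb{E}[Y_{\mathcal U^c}]\geq -\mathbb{E}[Y_{\mathcal U^c}]\geq -f(a)=-\epsilon/\gamma$ deterministically, the heavy part cannot contribute to a downward deviation and no union bound is even needed there. But your upper-tail argument has a genuine gap at exactly the point the paper spends its effort on: the term $\mathbb{P}\bigl(Y_{\mathcal U^c}-\mathbb{E}[Y_{\mathcal U^c}]\geq \epsilon/\gamma\bigr)$. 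Conditions (a) and (b) alone give you $q_i\leq f(a)=\epsilon/\gamma$ and $|\mathcal U^c|\leq n/a$, but Markov then yields only $\mathbb{P}(Y_{\mathcal U^c}\geq\epsilon/\gamma)\leq \gamma\,\mathbb{E}[Y_{\mathcal U^c}]/\epsilon\leq 1$, and a single index with $w_i\geq\epsilon/\gamma$ and $q_i$ near $f(a)$ already forces this probability to be of order $\epsilon/\gamma$ --- polynomial in $\epsilon$ and independent of $n$, nowhere near the required $\exp(-C n\epsilon^2(\gamma-1)^2/(\underline{\sigma}^2 a\gamma^2))$. Worse, your proposed fix via condition (b) points in the wrong direction: merging the heavy set into one effective Bernoulli of weight $W_h$ with $q^\circ(W_h)\leq\prod q_i$ produces a variable whose deviation probability \emph{lower}-bounds that of $\sum_{i\in\mathcal U^c}w_iY_i$ (this is precisely the content of Lemma \ref{lemma:partitioning} and of inequality (\ref{eq:seq_decomp}): coarsening decreases deviation probability, splitting increases it), so it cannot be used to upper-bound the heavy tail.

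The paper's device for exactly this obstacle is the opposite of yours: it \emph{keeps} the heavy indices, splits each $w_i>\tau$ into pieces $w_{ij}\in[\tau,2\tau)$, uses condition (b) (Appendix \ref{app:prove_decomp}) to show the split variable $Y'$ satisfies $\mathbb{P}(Y-\mathbb{E}[Y]\geq\epsilon)\leq\mathbb{P}(Y'-\mathbb{E}[Y]\geq\epsilon)$, absorbs the mean shift $\mathbb{E}[Y']-\mathbb{E}[Y]\leq f(a)=\epsilon/\gamma$, and only then applies Bernstein --- now legitimately, because every split weight has range at most $2\tau$ and, crucially, every split piece has $q_{ij}\leq f(a)$ so that $\mathbb{V}_{\mathcal U}\leq\tfrac{a}{n}\underline{\sigma}^2_{\mathcal U}$ with $\underline{\sigma}^2_{\mathcal U}\leq\epsilon$ in the missing-mass case. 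In other words, the set $\mathcal U$ in the theorem is the \emph{split heavy} set, not the light set; your choice of $\mathcal U$ as the light set leaves $\underline{\sigma}^2_{\mathcal U}$ uncontrolled (the light indices are exactly those where $q_i(1-q_i)$ can be of constant order) and leaves the heavy tail unproved. A smaller but related issue: ``absorbing'' the Bernstein term $\tfrac{1}{3}M\cdot\tfrac{\gamma-1}{\gamma}\epsilon$ into a universal constant is only valid when $\underline{\sigma}^2_{\mathcal U}\gtrsim\epsilon$, which is the opposite of the regime ($\underline{\sigma}^2_{\mathcal U}\leq\epsilon$) that the paper needs for its almost-linear-in-$\epsilon$ exponent; in that regime both denominator terms are of the same order $\tfrac{a}{n}\epsilon$ and must be kept.
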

By applying union bound to the above theorem, we immediately
obtain the following corollary.
\begin{corollary}
\upshape
Assume that conditions (a) and (b) as above hold for some variable $Y$. Then, for any given $0<\epsilon<1$  we will have
\begin{align}
\mathbb{P}(\lvert Y - \mathbb{E} [Y] \rvert \geq \epsilon) \leq 
2 \inf_{\gamma} \left\{ 
\exp\left(- \frac{\min\{C_1,C_2\} \cdot n \epsilon^2 {(\gamma-1)}^2}{\max \{ \underline{\sigma}^2_{\mathcal{L}},\underline{\sigma}^2_{\mathcal{U}} \}
\cdot f^{-1}(\epsilon/\gamma) \cdot \gamma^2}  \right) \right\}.
\end{align}
\end{corollary}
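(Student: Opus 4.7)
The plan is to reduce the two-sided statement to the two one-sided statements already given by Theorem \ref{main_theorem} via a standard union bound, and then to absorb the two resulting exponential terms into one expression that is valid simultaneously for both. First I would split
\begin{align*}
\mathbb{P}(|Y-\mathbb{E}[Y]|\geq \epsilon) \;=\; \mathbb{P}(Y-\mathbb{E}[Y]\geq\epsilon) \;+\; \mathbb{P}(Y-\mathbb{E}[Y]\leq -\epsilon),
\end{align*}
using that for $\epsilon>0$ these two events are disjoint, and then invoke parts (1) and (2) of Theorem \ref{main_theorem} to bound each summand. Since the hypotheses (a) and (b) on $q^\circ$ hold, both applications are legitimate and yield exponential bounds controlled respectively by $C_1,\underline{\sigma}^2_{\mathcal{U}}$ and $C_2,\underline{\sigma}^2_{\mathcal{L}}$.

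Next I would homogenize the two exponents. For any fixed $\gamma$, replacing $C_1,C_2$ by $\min\{C_1,C_2\}$ in the numerators and $\underline{\sigma}^2_{\mathcal{U}},\underline{\sigma}^2_{\mathcal{L}}$ by $\max\{\underline{\sigma}^2_{\mathcal{L}},\underline{\sigma}^2_{\mathcal{U}}\}$ in the denominators only makes each exponent smaller in magnitude, hence each of the two exponential terms is at most the common quantity
\begin{align*}
H(\gamma) \;:=\; \exp\!\left(-\,\frac{\min\{C_1,C_2\}\cdot n\epsilon^2(\gamma-1)^2}{\max\{\underline{\sigma}^2_{\mathcal{L}},\underline{\sigma}^2_{\mathcal{U}}\}\cdot f^{-1}(\epsilon/\gamma)\cdot\gamma^2}\right).
\end{align*}
Summing, I obtain $\mathbb{P}(|Y-\mathbb{E}[Y]|\geq\epsilon)\leq 2H(\gamma)$ for every admissible $\gamma\in D_\gamma$.

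Finally, since the bound is $\gamma$-uniform, I would take the infimum on the right-hand side, obtaining
$\mathbb{P}(|Y-\mathbb{E}[Y]|\geq\epsilon)\leq 2\inf_{\gamma}H(\gamma)$, which is precisely the corollary. A small subtlety I would double-check is the order of the infima: Theorem \ref{main_theorem} gives each tail as $\inf_{\gamma_i}$ with potentially different optima $\gamma_1,\gamma_2$ for upper and lower deviations, but because the trivial bound $\inf_{\gamma_1}h_U(\gamma_1)+\inf_{\gamma_2}h_L(\gamma_2)\leq h_U(\gamma)+h_L(\gamma)\leq 2H(\gamma)$ holds for every $\gamma$, one is free to minimize over a single $\gamma$ in the end. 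There is no real obstacle here; the only care required is in verifying that using $\min$ of the constants and $\max$ of the weighted variances truly weakens (rather than accidentally strengthens) both exponents, and that the asymmetric sets $\mathcal{L}$ and $\mathcal{U}$ selected by $\gamma$ in the two sides of Theorem \ref{main_theorem} do not prevent this common choice of $\gamma$ from being feasible in $D_\gamma$.
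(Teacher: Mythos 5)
Your proof is correct and follows essentially the same route as the paper, which simply invokes the union bound on the two one-sided tails of Theorem \ref{main_theorem}; your additional care in checking that passing to $\min\{C_1,C_2\}$ and $\max\{\underline{\sigma}^2_{\mathcal{L}},\underline{\sigma}^2_{\mathcal{U}}\}$ weakens both exponents, and that a single $\gamma$ can be used before taking the infimum, only makes explicit what the paper leaves implicit.
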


\begin{corollary}
\upshape
The above deviation bounds hold for any mixture variable $Y$ with each $Y_i \in [0,1]$ if we have that $\mathbb{E}[Y_i] \leq q_i$ for all $i$.
The proof of this generalization is provided in appendix \ref{generalization_to_bounded}.
\end{corollary}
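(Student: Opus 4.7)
The strategy is to show that the proof of Theorem~\ref{main_theorem} lifts from the Bernoulli case to the $[0,1]$-valued case essentially without change, because the distribution of $Y_i$ enters the argument only through a single-variable moment generating function (MGF) bound, and that MGF bound continues to hold for bounded $Y_i$ whenever $\mathbb{E}[Y_i]\leq q_i$.

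The key step is a convexity/chord argument. For any $x\in[0,1]$ and any $s\in\mathbb{R}$ the function $t\mapsto e^{st}$ is convex, so $e^{sx}\leq (1-x)+x\,e^{s}=1+x(e^{s}-1)$. Setting $s=\lambda w_i$ and taking expectations yields
\begin{equation*}
\mathbb{E}\bigl[e^{\lambda w_i Y_i}\bigr]\leq 1+\mathbb{E}[Y_i]\bigl(e^{\lambda w_i}-1\bigr).
\end{equation*}
For the upper-tail Chernoff calculation one takes $\lambda\geq 0$, in which case $e^{\lambda w_i}-1\geq 0$ and the hypothesis $\mathbb{E}[Y_i]\leq q_i$ gives
\begin{equation*}
\mathbb{E}\bigl[e^{\lambda w_i Y_i}\bigr]\leq 1+q_i\bigl(e^{\lambda w_i}-1\bigr)\leq \exp\bigl(q_i(e^{\lambda w_i}-1)\bigr),
\end{equation*}
which is exactly the Bernoulli$(q_i)$ MGF bound that powers the proof of Theorem~\ref{main_theorem}. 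Since the remainder of that proof manipulates only $q_i$, $w_i$, $n$, the threshold $a$, the free parameter $\gamma$, and the weighted variance $\underline{\sigma}^2$ and never again invokes the actual law of $Y_i$, the entire derivation goes through verbatim and delivers the upper-deviation inequality with the same constant $C_1$. The conditions (a) and (b) of the theorem are imposed on $q^\circ$ alone, so they are unaffected by the relaxation of the law of $Y_i$.

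For the lower-deviation bound the Chernoff parameter has the opposite sign, and the chord inequality now produces an upper bound on $\mathbb{E}[e^{-\lambda w_i Y_i}]$ that is monotone \emph{decreasing} in $\mathbb{E}[Y_i]$. The one-sided hypothesis $\mathbb{E}[Y_i]\leq q_i$ therefore does not by itself reproduce the Bernoulli$(q_i)$ bound in the required direction, and this asymmetry is the principal obstacle. I would resolve it along one of two lines: either record that the lower-tail claim is being used in a setting where $\mathbb{E}[Y_i]=q_i$ (as holds in the Bernoulli and missing-mass settings this corollary targets), or construct, using the monotonicity and negative-regression tools of Lemmas~\ref{lemma:monotone_bernoulli}--\ref{lemma:indep_bin_neg_reg}, a coupling that replaces each $Y_i$ by a Bernoulli of matching mean without increasing any lower-deviation probability. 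Either route reduces the lower-tail inequality to an instance of Theorem~\ref{main_theorem} with the same constant $C_2$, completing the generalization.
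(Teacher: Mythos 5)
Your upper-tail argument is exactly the paper's: Appendix B proves the same chord inequality $e^{\lambda w_i z}\leq (1-z)+z\,e^{\lambda w_i}$ for $z\in[0,1]$, uses $\mathbb{E}[Y_i]\leq q_i$ together with $e^{\lambda w_i}-1\geq 0$ to dominate each single-variable factor by the Bernoulli$(q_i)$ MGF, and then multiplies over $i$ by independence before applying Chernoff. The lower-tail asymmetry you flag is genuine and is not actually resolved in the paper either --- its appendix proof is carried out only for $\lambda>0$ and the event $\{Z\geq t\}$, yet the lemma asserts that $Z$ is ``more concentrated'' than $Z'$ --- so your proposed remedies (restricting to the mean-equality case that the missing-mass application satisfies, or a mean-matching coupling) address a gap the paper itself leaves open rather than a defect in your own argument.
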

\begin{corollary}
\upshape
Observe that the $Y_i$s are negatively associated in the case of missing mass.
Also, recall that $0 \leq w_i \leq 1$ for all $i$ which gives $w_i Y_i \leq Y_i$ for all $i$.
Thus, combining lemma $5$ and lemma $7$ in \cite{McAllester2003} imples that the above bounds extend to the missing mass.
\end{corollary}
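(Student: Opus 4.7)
The plan is to transfer the general mixture bound in Theorem \ref{main_theorem} to the missing mass by exploiting the negative association structure and verifying that the hypotheses of Theorem \ref{main_theorem} hold in this setting. First, I would invoke the section on Negative Dependence and the Missing Mass, which already established that the $Y_i$'s arising in the missing mass representation form a negatively associated family of Bernoulli variables, and moreover that the centered summands $Z_i = w_i Y_i - \mathbb{E}[w_i Y_i]$ are themselves negatively associated (since monotonic functions applied coordinate-wise preserve the property by Lemma \ref{lemma:monotone}, and shifts leave association intact).

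Second, I would check that conditions $(a)$ and $(b)$ of Theorem \ref{main_theorem} are satisfied for the missing mass. Here one chooses $q^\circ(w) = (1-w)^n$ and $f(a) = e^{-a}$: condition $(a)$ holds because $w > a/n$ forces $q^\circ(w) \leq e^{-nw} < e^{-a} = f(a)$, and condition $(b)$ reduces to the Weierstrass product inequality $1 - \sum_i w_i \leq \prod_i (1-w_i)$ (valid for $w_i \in [0,1]$ with $\sum_i w_i \leq 1$) raised to the $n$-th power.

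Third, I would appeal to Lemma \ref{lemma:negative_association} to lift the conclusion from independent to negatively associated summands. Since the proof of Theorem \ref{main_theorem} ultimately rests on bounding the moment generating function $\mathbb{E}[e^{\lambda \sum_i Z_i}]$, and negative association guarantees $\mathbb{E}[\prod_i e^{\lambda Z_i}] \leq \prod_i \mathbb{E}[e^{\lambda Z_i}]$, exactly the same Chernoff-type upper bounds go through. Combining this with lemmas $5$ and $7$ of \cite{McAllester2003} — which formalise the reduction from the weighted mixture $\sum w_i Y_i$ to sums accessible by the exponential moment method, using the pointwise bound $w_i Y_i \leq Y_i$ coming from $w_i \in [0,1]$ — yields the missing mass bounds.

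The step requiring most care, in my view, is the bookkeeping around the thresholded subsets $\mathcal{U}$ and $\mathcal{L}$: one must ensure that once the threshold $a$ separates indices into a \emph{light} part (where $w_i \leq a/n$) and a \emph{heavy} part (where $q_i \leq f(a)$), both parts remain amenable to the negative association machinery. Because negative association is preserved under restriction to an arbitrary subset of indices (essentially by taking identities and constants for the omitted coordinates in Lemma \ref{lemma:monotone}), and because condition $(b)$ is monotone under aggregation of weights, this bookkeeping is routine. No new ideas beyond the three reductions above should be required.
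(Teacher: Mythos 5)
Your proposal is correct and follows essentially the same route as the paper: establish negative association of the $Y_i$'s (via the balls-and-bins and binary-regression lemmas), verify conditions $(a)$ and $(b)$ for $q^\circ(w)=(1-w)^n$ and $f(a)=e^{-a}$, and transfer the independent-case bound through the exponential moment method together with the cited lemmas of \cite{McAllester2003}. You simply spell out in full the chain that the paper compresses into the corollary statement and delegates to its surrounding text and Appendix \ref{app:prove_decomp}.
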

\color{black}In the missing mass problem, we choose $f(a) = e^{-a}$ where $a$ is a threshold variable set by our optimization procedure and $n$ is the sample size. 
If $Y$ is the missing mass, our elimination procedure guarantees that condition $(a)$ would hold for $q_i=\mathbb{E}[Y_i]$ (see section \ref{sec:ud_missing_mass}).
On the other hand, the split condition $(b)$ holds for $Y$
as well (see appendix \ref{app:prove_decomp}). Our results are summarized below.\color{black}
\begin{theorem}
\upshape
Let $Y$ denote the missing mass. Then, we have the following bounds.

\noindent (\RNum{1}): 
In the case of upward deviation, we obtain as in section \ref{sec:ud_missing_mass} for any $0<\epsilon<1$ the bound
\begin{equation}\label{eq:mm_bound_ud}
\mathbb{P}(Y - {\mathbb E} [Y] \geq \epsilon )
\leq e^{-\frac{3}{4} c(\epsilon) \cdot n \epsilon},
\end{equation} 
where $c(\epsilon)=\frac{\gamma_{\epsilon}-1}{\gamma_{\epsilon}^2}$ and $\gamma_{\epsilon}=-2W_{-1} \big(-\frac{\epsilon}{2 \sqrt e} \big)$. \\

Similarly, we obtain the following upward deviation bound whose exponent is quadratic in $\epsilon$: 
\begin{align}
\mathbb{P}(Y - {\mathbb E} [Y] \geq \epsilon )
\leq e^{-\color{black}4\color{black}c(\epsilon) \cdot n \epsilon^2}.\label{eq:quad_bound_ud}
\end{align}
Inequality (\ref{eq:quad_bound_ud}) sharpens (\ref{eq:mm_bound_ud}) for all $\color{black}0.187\color{black}<\epsilon<1$.

\noindent (\RNum{2}): In the case of downward deviation, we obtain (as in section \ref{sec:ld_missing_mass}) for any $\color{black}0 < \color{black}\epsilon < 1$ the bound
\begin{align}\label{eq:mm_bound_ld1}
\mathbb{P}(Y - {\mathbb E} [Y] \leq -\epsilon )
\leq e^{-\frac{3}{4}c(\epsilon) \cdot n \epsilon}. 
\end{align}

\noindent Also, we obtain the following downward deviation bound whose exponent is quadratic in $\epsilon$: 
\begin{align}
\mathbb{P}(Y - {\mathbb E} [Y] \leq -\epsilon )
\leq e^{-4c(\epsilon) \cdot n \epsilon^2}. \label{eq:quad_bound_ld}
\end{align}
Inequality (\ref{eq:quad_bound_ld}) sharpens (\ref{eq:mm_bound_ld1}) for all $\color{black}0.187\color{black}<\epsilon<1$.

\end{theorem}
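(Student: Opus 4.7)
The plan is to apply Theorem \ref{main_theorem} to the missing mass with the problem-specific choice $f(a) = e^{-a}$, verify that both hypotheses hold, and then optimize the free parameter $\gamma$ in closed form via the Lambert $W$ function.

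First, I would verify conditions (a) and (b) for $q^{\circ}(w) = (1-w)^n$. Condition (a) follows immediately because $(1-w)^n \leq e^{-nw}$: for any threshold $a \in (1,n)$ and any weight $w$, either $w \leq a/n$ (the small-weight branch) or else $nw > a$ and hence $q^{\circ}(w) \leq e^{-a} = f(a)$. Condition (b) reads $(1-w)^n \leq \prod_{j=1}^t (1-w_j)^n$ for $w = \sum_j w_j$, and reduces to the standard inequality $1 - \sum_j w_j \leq \prod_j (1-w_j)$ valid on $[0,1]^t$; this is deferred to appendix \ref{app:prove_decomp} as indicated in the text. With both hypotheses validated, Theorem \ref{main_theorem} delivers an exponent of the form $C \cdot n \epsilon^2 (\gamma-1)^2 / \bigl(\underline{\sigma}^2_{\mathcal{U}} \cdot f^{-1}(\epsilon/\gamma) \cdot \gamma^2\bigr)$, and since $f^{-1}(x) = -\ln x$ the middle factor becomes $\ln(\gamma/\epsilon)$.

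Second, I would estimate the weighted variance $\underline{\sigma}^2$ over the thresholded index set $\mathcal{U}$ (resp.\ $\mathcal{L}$). The elimination procedure in sections \ref{sec:ud_missing_mass}--\ref{sec:ld_missing_mass} enforces $nw_i > a$ on every surviving index, so $q_i \leq e^{-a}$ and therefore $\underline{\sigma}^2 \leq \sum_i w_i q_i(1-q_i) \leq e^{-a}$ after using $\sum_i w_i \leq 1$. Coupling the threshold to the free parameter through $a = \ln(\gamma/\epsilon)$ yields $\underline{\sigma}^2 \leq \epsilon/\gamma$, and substituting back turns the exponent into a function of $\gamma$ and $\epsilon$ alone that is proportional to $n\epsilon(\gamma-1)^2/\bigl(\gamma \ln(\gamma/\epsilon)\bigr)$.

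Third, the $\gamma$-optimization. Differentiating and simplifying, the stationary-point condition rearranges to $\ln(\gamma/\epsilon) = (\gamma-1)/2$, which by the substitution $u = -\gamma/2$ becomes $u e^{u} = -\epsilon/(2\sqrt{e})$; applying the lower branch of the product logarithm gives exactly $\gamma_{\epsilon} = -2 W_{-1}\bigl(-\epsilon/(2\sqrt{e})\bigr)$. Re-substitution collapses the exponent to a multiple of $c(\epsilon) \cdot n \epsilon$ with $c(\epsilon) = (\gamma_{\epsilon}-1)/\gamma_{\epsilon}^2$, and tracking the absolute constants delivers the factor $3/4$ in (\ref{eq:mm_bound_ud}). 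The quadratic companion (\ref{eq:quad_bound_ud}) arises from retaining the original $\epsilon^2$ in the numerator via a looser variance/thresholding trade-off; the crossover is then the elementary check $\tfrac{3}{4}\epsilon = 4\epsilon^2 \Leftrightarrow \epsilon = 3/16 \approx 0.1875$. The lower-deviation bounds (\ref{eq:mm_bound_ld1}) and (\ref{eq:quad_bound_ld}) follow from the symmetric construction of $\mathcal{L}$ with essentially identical computations.

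The main obstacle I anticipate is the coordinated bookkeeping of $a$, $\gamma$, and $\epsilon$: the thresholding discards some mass that must be controlled so that the reduction from the full missing mass $Y$ to its thresholded version is consistent with the event $\{Y - \mathbb{E}[Y] \geq \epsilon\}$. Getting the constants $3/4$ and $4$ to come out sharply requires carefully apportioning the slack between what is absorbed by the elimination step and what is left for the Bernstein-type exponent, and it is this interplay, rather than any single inequality, that drives the final form of the bound.
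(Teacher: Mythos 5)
Your overall skeleton is the paper's: threshold at $\tau=a/n$ with $a=f^{-1}(\epsilon/\gamma)=\ln(\gamma/\epsilon)$, absorb the compensation $f(a)=\epsilon/\gamma$ so the deviation shrinks to $\frac{\gamma-1}{\gamma}\epsilon$, bound the variance on the surviving indices, and optimize $\gamma$ in closed form via $W_{-1}$. However, your variance bookkeeping is internally inconsistent with your own optimization, and this breaks the derivation of the stated constants. You bound the weighted variance by $\underline{\sigma}^2\leq e^{-a}=\epsilon/\gamma$ and substitute into the exponent of Theorem~\ref{main_theorem}, arriving at something proportional to $n\epsilon(\gamma-1)^2/\bigl(\gamma\ln(\gamma/\epsilon)\bigr)$ --- a single power of $\gamma$ downstairs. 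The stationary point of that function satisfies $\ln(\gamma/\epsilon)=(\gamma-1)/(\gamma+1)$, not $\ln(\gamma/\epsilon)=(\gamma-1)/2$, so it yields neither $\gamma_{\epsilon}=-2W_{-1}\bigl(-\epsilon/(2\sqrt e)\bigr)$ nor $c(\epsilon)=(\gamma_{\epsilon}-1)/\gamma_{\epsilon}^2$. The paper deliberately uses the looser bound $\underline{\sigma}^2\leq\epsilon$ (equivalently $\mathbb{V}_{\mathcal U}\leq\frac{a}{n}\epsilon$), which keeps $\gamma^2$ in the denominator and makes the objective $\phi(\gamma,\epsilon)=\epsilon(\gamma-1)^2/\bigl(\gamma^2\ln(\gamma/\epsilon)\bigr)$, whose optimizer is exactly the stated $\gamma_{\epsilon}$. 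You must either adopt that weaker variance bound or redo the optimization; as written, the claimed $3/4$ and the form of $c(\epsilon)$ do not follow from your own chain of inequalities.

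Two mechanisms are also missing. First, the Bernstein route does not merely discard weights below $\tau$: it splits every surviving weight into chunks lying in $[\tau,2\tau)$ (legitimized by condition $(b)$ and the appendix argument), and this is what supplies both the almost-sure bound $\alpha=2\tau$ and the factor $w_i<2a/n$ needed for $\mathbb{V}_{\mathcal U}\leq\frac{2a}{n}\underline{\sigma}^2$. Without splitting, a surviving weight can be $\Theta(1)$, the term $\frac{\alpha}{3}\epsilon$ in Bernstein's denominator is no longer $O(a\epsilon/n)$, and the bound collapses --- this is precisely the heterogeneity obstruction the whole construction is built to evade; and since Theorem~\ref{main_theorem} leaves $C_1,C_2$ unspecified, the explicit constant $3/4$ cannot be extracted without carrying out this step. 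Second, the quadratic bounds (\ref{eq:quad_bound_ud}) and (\ref{eq:quad_bound_ld}) are not obtained by ``retaining the original $\epsilon^2$ via a looser variance/thresholding trade-off'' within Bernstein: the paper derives them from McDiarmid's inequality (\ref{mcdiarmid_C}) applied to the truncated variable, with $C_{\mathcal U}=\sum_i w_i^2\leq\frac{a}{n}$ replacing the variance and no splitting required; this happens to lead to the same $\gamma$-objective $(\gamma-1)^2/\bigl(\gamma^2\ln(\gamma/\epsilon)\bigr)$, hence the same $\gamma_{\epsilon}$ and the constant $4$. Your crossover check $3\epsilon/4=4\epsilon^2\iff\epsilon=3/16\approx0.187$ is correct once both bounds are actually in hand, and your verification of conditions $(a)$ and $(b)$ is fine.
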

\color{black}In general cases other than the missing mass, as long as our
conditions hold for some function $f$, we obtain Bernstein-like
inequalities. Furthermore, in the special case of missing mass, we show in our proof below that $\underline{\sigma}^2_{\mathcal{S}} \leq \epsilon$ for a suitable choice
of $\mathcal{S} \subset \cI$. That is to say, we derive Bernstein-like deviation bounds whose exponents depend almost linearly on $\epsilon$ and which are
sharp for small $\epsilon$.\color{black}

\section{Proof for Upper Deviation Bounds}
\label{sec:ud_missing_mass}
\par The idea of the proof is to reduce the problem to one in which all
weights \color{black}smaller \color{black}than the threshold $\tau=\frac{a}{n}$ are eliminated, where $a$ will depend on $\gamma$ and the
$\epsilon$ of interest. These are exactly the weights that cause the heterogeneity issue noted by \cite{McAllester2003}. 
The reduction is done by discarding 
the weights that are \color{black}smaller \color{black}than $\tau$, 
namely setting the corresponding $Y_i$ to $0$ 
and adding a compensation term - that depends on $\gamma$ and $\epsilon$- to $\epsilon$. \color{black}Finally, we choose a threshold
that yields optimal bounds: interestingly, the optimal threshold will turn out to be a function of $\epsilon$.\color{black}

Let $\cI_a$ denote the subset of $\cI$ with $\color{black}w_i < \frac{a}{n}$ \color{black}and
$\cI_b = \cI \setminus \cI_a$. For each $i \in \cI_b$ and for
some $k \in \mathbb{N}$ that depends on $i$ (but we suppress that notation
  below), we will have that $k \cdot \frac{a}{n} \leq w_i < (k+1) \cdot \frac{a}{n}$. For all such
$i$, we define the additional Bernoulli random variables $Y_{ij}$ with
$j \in \cJ_i := \{1, \ldots, k \}$ and their associated weights. For
$j \in \{1, \ldots, k-1\}$, $w_{ij} = \frac{a}{n}$ and $w_{ik} = w_i - (k-1) \cdot \frac{a}{n}$.
In this way, all weights that are larger than $\frac{a}{n}$ are
split up into $k$ weights, each of which is in-between $\frac{a}{n}$ and $\frac{2a}{n}$
(more precisely, the first $k-1$ ones are exactly $\frac{a}{n}$, the
latter one may be larger).

We now consider the random variable \color{black}$Y' = 
\sum_{i \in \cI_b, j \in \cJ_i} w_{ij} Y_{ij}$ \color{black}and define $\mathcal{U}=\{i \in \cI_b: \color{black}\tau \leq \ w_{[i]} < 2\tau \color{black}\}$ (where we drop $j$ in the subscript).

Now, by choosing $a$ such that $f(a)= e^{-a}=\frac{\epsilon}{\gamma}$ so that
$a= f^{-1}(\frac{\epsilon}{\gamma})=\log(\frac{\gamma}{\epsilon})$ for any $0<\epsilon<1$ and $ e \epsilon < \gamma < e^n \epsilon$, the upper deviation
bound for the missing mass can be derived as follows
\begin{align}
&\mathbb{P}(Y  - {\mathbb E}[Y] \geq \epsilon) \leq  \\
& \mathbb{P}(Y' - {\mathbb E}[Y] \geq \epsilon) = \label{eq:seq_decomp}\\
& \mathbb{P}(Y' - {\mathbb E}[Y'] + \left( {\mathbb E}[Y'] - {\mathbb E}[Y]\right) \geq \epsilon) \leq 
  \mathbb{P}(Y'  - {\mathbb E}[Y'] + f(a) \geq \epsilon)=& \label{eq:compensation_u} \\
&\mathbb{P} \Big( \color{black}Y' - {\mathbb E}[Y'] \color{black} \geq (\frac{\gamma-1}{\gamma})\epsilon \Big)\leq \label{int_gamma} & \\
&\exp \left(- \frac{{(\frac{\gamma-1}{\gamma})}^2 \epsilon^2}{2 (\mathbb{V}_{\mathcal{U}} + \frac{\alpha_u}{3} \cdot
					      (\frac{\gamma-1}{\gamma}) \cdot \epsilon)} \right) \leq &  \label{apply_bernstein_ud}\\
&\leq \exp\left(- \frac{{(\frac{\gamma-1}{\gamma})}^2 \epsilon^2}{2 (\color{black}\frac{a}{n} \color{black}\cdot \epsilon + \frac{2a}{3n} \cdot
					      (\frac{\gamma-1}{\gamma}) \cdot \epsilon)} \right) \leq \label{replace_bound_variance12}
\inf_{1 < \gamma < e^n } \Big\{ \exp\left(- \frac{3 n \epsilon (\gamma-1)^2}{\color{black}8 \color{black}\gamma^2 \log(\frac{\gamma}{\epsilon})} \right) \Big\} =&\\
&e^{- c(\epsilon) \cdot n \epsilon},& \label{ud_linear}
\end{align}
where $c(\epsilon)=\frac{3 (\gamma_{\epsilon}-1)}{\color{black}4 \color{black}\gamma_{\epsilon}^2}$ 
and $\gamma_{\epsilon}=-2W_{-1} \Big(-\frac{\epsilon}{2 \sqrt e} \Big)$. Clearly, we will have that 
$\tau_{\text{opt}}=\frac{a_{\text{opt}}}{n}$ where $a_{\text{opt}}=\log(\frac{\gamma_\epsilon}{\epsilon})$.

\noindent \noindent The proof for inequality (\ref{eq:seq_decomp}) is provided in appendix \ref{app:prove_decomp}. 
Inequality (\ref{eq:compensation_u}) follows because the compensation term will remain small, namely
\begin{align}
g_u(\epsilon) = \mathbb{E}[Y'] - \mathbb{E}[Y] = \sum_{i \in \cI_b} \sum_{j \in \cJ_i} w_{ij} q_{ij} - \sum_{i \in \color{black}\cI} \color{black}w_i q_i
\leq  \sum_{i \in \cI_b} \sum_{j \in \cJ_i} w_{ij} q_{ij} \\
\leq \sum_{i \in \cI_b} \sum_{j \in \cJ_i} w_{ij} f(a) \leq f(a). \label{split_req}
\end{align}

To see why (\ref{split_req}) holds, it is sufficient to recall that $q_{ij}=q(w_{ij})$ and all $w_{ij}$s are greater than or equal to $\frac{a}{n}$.
Inequality (\ref{apply_bernstein_ud}) is Bernstein's inequality applied to the random variable $Z = \sum_{i \in \mathcal{U}}  Z_i$ with $Z_i =  w_i Y_i - \mathbb{E}[w_iY_i]$
where we have chosen \color{black}$\alpha_u=2\tau$\color{black}. 

In order to derive the upper bound on $\mathbb{V}_{\mathcal{U}}$ we first need to specify $\mathcal{U}$.
Here, we will consider the set $\mathcal{U}=\cI_b$ (as characterized above) which is the set of weights we obtain after splitting indexed by
$i$ in what follows below again for simplicity of notation. 
Observe that the functions $f(x)=x(1-x)^n$ and \color{black}$f(x)=x^2(1-x)^n$ \color{black}are decreasing on 
$(\frac{1}{n+1},1)$ and \color{black}$(\frac{2}{n+2},1)$ \color{black}respectively. 
Thus, for $1<a<n$ and for any $0<\epsilon<\color{black}1$\color{black}, 
the upper bound can be expressed as
\begin{align}
\mathbb{V}_{\mathcal{U}}{(a,n)} = \sum_{i:  \color{black}a/n \leq w_i < 2a/n\color{black}; \ \sum_i w_i \leq 1} w_i ^2 {(1-w_i)}^n \Big(1-{(1-w_i)}^n \Big) \leq \\
\color{black}\frac{a}{n} \color{black}\cdot \sum_{i:  \color{black}a/n \leq w_i < 2a/n\color{black}; \ \sum_i w_i \leq 1} w_i {(1-w_i)}^n \Big(1-{(1-w_i)}^n \Big) =
\color{black}\frac{a}{n} \color{black} \cdot {\underline{\sigma}}^2_{\mathcal{L}} \leq \\
\color{black}\frac{a}{n} \color{black}\cdot \sum_{i:  \color{black}a/n \leq w_i < 2a/n\color{black}; \ \sum_i w_i \leq 1}  w_i {(1-w_i)}^n =
\color{black}\frac{a}{n} \color{black}\cdot \mathbb{E}_{\mathcal{L}}(a,n)
\leq \\
\color{black}\frac{a}{n} \color{black}\cdot \sum_{i: a/n \color{black}\leq \color{black}w_i < 2a/n; \ \sum_i w_i \leq 1} w_i {(1-w_i)}^n \leq  \\
\color{black}\frac{a}{n} \color{black}\cdot \underbrace{|\cI_{(a,n)}|}_{\leq \frac{n}{a}} \cdot \frac{a}{n} \Big( 1-\frac{a}{n}\Big)^n
\leq \color{black}\frac{a}{n} \color{black}\cdot e^{-a} < \color{black}\frac{a}{n} \color{black}\cdot \epsilon. \label{upperbound_variance_mm2}
\end{align}

\noindent Now, if we choose to apply McDiarmid's inequality in the form (\ref{mcdiarmid_C}), we would skip the splitting procedure and redefine 
$Y'_i =  \min \{Y_i, \indicator_{[w_i \leq \tau]} \}$
and set $\mathcal{U}=\cI_a$ 
so that we can continue the proof from (\ref{int_gamma}) and write
\begin{align}
&\mathbb{P} \Big( Y' - {\mathbb E}[Y'] \geq (\frac{\gamma-1}{\gamma})\epsilon \Big)
\leq \exp\left(- \frac{2 {(\frac{\gamma-1}{\gamma})}^2 \epsilon^2}{C_{\mathcal{U}}} \right) \leq &\\
&\exp\left(- \frac{\color{black}2\color{black} n \epsilon^2 (\gamma-1)^2}{\gamma^2 \cdot f^{-1}(\frac{\epsilon}{\gamma})} \right) \leq & \\
&\inf_{1 < \gamma < e^n} \Big\{
\exp\left(- \frac{\color{black}2\color{black}n \epsilon^2 (\gamma-1)^2}{\gamma^2 \cdot \log(\frac{\gamma}{\epsilon})} \right) \Big\} =&\\
&e^{- c(\epsilon) \cdot n \epsilon^2}, &
\end{align}
where $c(\epsilon)=\frac{\color{black}4\color{black} (\gamma_{\epsilon}-1)}{\gamma_{\epsilon}^2}$. 

Here, we are required to repeat what we performed in (\ref{split_req}) by taking $\mathbb{E} [Y'_i] = q'_i$ with $q'_i =
q_i$ if $w_i \color{black} \leq \color{black}\tau$ and $q'_i = 0$ otherwise, so that we have
\begin{align}
g_l(\epsilon)={\mathbb
  E}[Y'] - {\mathbb E}[Y] 
= \sum_{i \in \cI} w_i (q'_i  - q_i) =  \sum_{i: w_i
  \color{black} \leq \color{black}a/n} w_i q_i - \sum_{i \in \cI} w_i q_i \color{black}\leq \\
\sum_{i: w_i \color{black} \leq \color{black}a/n} \color{black}w_i q_i \leq \sum_{i: w_i \color{black} \leq \color{black}a/n} w_i f(a) \leq f(a).
\end{align}

\noindent As for upper bound on $C_{\mathcal{U}}=\sum_{i \in \mathcal{U}} c_i^2$, we have
\begin{align}
C_{\mathcal{U}}=\sum_{i: w_i \leq a/n} w_i^2 \leq \frac{a}{n} \cdot \sum_{i: w_i \leq a/n} w_i \leq \frac{a}{n} \cdot \sum_{i \in \cI} w_i \leq \frac{a}{n}.
\end{align}

\noindent Note that utilizing $C_{\mathcal{U}}$ leads to a sharper bound for $\color{black}0.187\color{black}<\epsilon<1$. 

\section{Proof for Lower Deviation Bounds}
\label{sec:ld_missing_mass}

The proof proceeds in the same spirit as section \ref{sec:ud_missing_mass}. The idea is again to
reduce the problem to one in which all weights \color{black}smaller \color{black}than threshold \color{black}$\tau=\frac{a}{n}$ \color{black}are eliminated.

So, we define $Y'_i =  \min \{Y_i, \indicator_{[\color{black}w_i > \tau\color{black}]} \}$ and $Y' = \sum w_i Y'_i$. 

\color{black}Also here, the weights that are larger than $\tau$ are split to enable us shrink the variance while controlling
the magnitude of each term (and consequently the constansts) before the application of the main inequality takes place. \color{black}

Thus, we consider subsets $\cI_a$ and $\cI_b$ as before and define the set $\mathcal{L}=\{i \in \cI_b: \color{black}\tau \leq \ w_{[i]} < 2\tau \color{black}\}$ 
which again consists of the set of weights we obtain after splitting and introduce the random variable $\color{black}Y'''\color{black}= \sum_{i \in \mathcal{L}} w_i Y_i$.

By choosing $a$ such that $f(a)= e^{-a}=\frac{\epsilon}{\gamma}$ so that
$a= f^{-1}(\frac{\epsilon}{\gamma})=\log(\frac{\gamma}{\epsilon})$, for any $0<\epsilon<1$ with $ e \epsilon < \gamma < e^n \epsilon$ 
we obtain a lower deviation bound for missing mass as follows
\begin{align}
&\mathbb{P}(Y  - {\mathbb E}[Y] \leq -\epsilon) \leq& \\
& \mathbb{P}(Y' - {\mathbb E}[Y] \leq -\epsilon) =& \\
& \mathbb{P}(Y' - {\mathbb E}[Y'] + \left( {\mathbb E}[Y'] - {\mathbb E}[Y]\right) \leq -\epsilon) \leq 
 \mathbb{P}(Y'  - {\mathbb E}[Y'] - f(a) \leq -\epsilon) =& \label{eq:compensation_l}\\
&\mathbb{P} \Big( Y'-\mathbb{E}[Y'] \leq -(\frac{\gamma-1}{\gamma})\epsilon \Big) \color{black}\leq  \label{introduce_gamma4}
\color{black}\mathbb{P} \Big( \color{black}Y''' - {\mathbb E}[Y'''] \color{black}\leq -(\frac{\gamma-1}{\gamma})\epsilon \Big)&  \\
&\leq \exp\left(- \frac{{(\frac{\gamma-1}{\gamma})}^2 \epsilon^2}{2 (\mathbb{V}_{\mathcal{L}} + \frac{\alpha_l}{3} \cdot
					      (\frac{\gamma-1}{\gamma}) \cdot \epsilon)} \right) \leq & \label{apply_bernstein_ld} \\
&\leq \exp\left(- \frac{{(\frac{\gamma-1}{\gamma})}^2 \epsilon^2}{2 (\frac{a}{n} \cdot \epsilon + \frac{2a}{3n} \cdot
					      (\frac{\gamma-1}{\gamma}) \cdot \epsilon)} \right) \leq & \\
&\inf_{1 < \gamma < e^n} \Big\{
\exp \Big(- \frac{3}{8}n \cdot \underbrace{\frac{\epsilon (\gamma-1)^2}{\gamma^2 \cdot \log(\frac{\gamma}{\epsilon})}}_{\phi(\gamma, \epsilon)} \Big) \Big\} =& \label{phi_func_l}\\
&e^{- c(\epsilon) \cdot n \epsilon}, & \label{ld_linear}
\end{align}
where $c(\epsilon)=\frac{3 (\gamma_{\epsilon}-1)}{\color{black}4 \color{black}\gamma_{\epsilon}^2}$ 
and $\gamma_{\epsilon}=-2W_{-1} \big(-\frac{\epsilon}{2 \sqrt e} \big)$ 
\color{black}and $\tau_{\text{opt}}$ is as before. \color{black}The first inequality follows because we have $Y' \leq Y$.
Inequality (\ref{eq:compensation_l}) follows since $\mathbb{E} [Y'_i] = q'_i$ where $q'_i =
q_i$ if $w_i \color{black}> \color{black}\tau$ and $q'_i = 0$ otherwise, so that by exploiting condition $(a)$ we can write
\begin{align}
g_l(\epsilon)={\mathbb
  E}[Y'] - {\mathbb E}[Y] 
= \sum_{i \in \cI} w_i (q'_i  - q_i) =  \sum_{i: w_i
  \color{black}> \color{black}a/n} w_i q_i - \sum_{i \in \cI} w_i q_i =\\
-\sum_{i: w_i \color{black}\leq \color{black}a/n} w_i q_i \geq -\sum_{i: w_i \color{black}\leq \color{black}a/n} w_i f(a) \geq -f(a).
\end{align}
The proof for inequality (\ref{introduce_gamma4}) is based on the split condition $(b)$ similar to (\ref{eq:seq_decomp}). The difference here is that
we consider $Y'$ and $Y'''$ instead and we need to set $t=(\frac{\gamma-1}{\gamma}) \epsilon$. Deviation probability is
decreasing in absolute deviation size and the expected value of missing mass will only grow after splitting i.e. $\mathbb{E}[Y''']>\mathbb{E}[Y']$ which is again due to condition $(b)$.

Inequality (\ref{apply_bernstein_ld}) is Bernstein's inequality applied to the random variable $Z = \sum_{i \in \mathcal{L}}  Z_i$ with $Z_i =  w_i Y_i - \mathbb{E}[w_iY_i]$ and  
we have set $\alpha_l=2\tau$.
The derivation of upperbound on $\mathbb{V}_{\mathcal{L}}$ is exactly identical to that of $\mathbb{V}_{\mathcal{U}}$.

\noindent If we employ McDiarmid's inequality in the form (\ref{mcdiarmid_C}), we can skip splitting procedure and redefine $Y'_i =  \min \{Y_i, \indicator_{[w_i \leq \tau]} \}$
and set $\mathcal{L}=\cI_a$ 
so we can follow the proof from (\ref{introduce_gamma4}) and write
\begin{align}
&\mathbb{P} \Big( Y' - {\mathbb E}[Y'] \leq -(\frac{\gamma-1}{\gamma})\epsilon \Big)
\leq \exp\left(- \frac{2 {(\frac{\gamma-1}{\gamma})}^2 \epsilon^2}{C_{\mathcal{L}}} \right) \leq &\\
&\exp\left(- \frac{\color{black}2\color{black} n \epsilon^2 (\gamma-1)^2}{\gamma^2 \cdot f^{-1}(\frac{\epsilon}{\gamma})} \right) \leq & \\
&\inf_{1 < \gamma < e^n} \Big\{
\exp\left(- \frac{\color{black}2\color{black}n \epsilon^2 (\gamma-1)^2}{\gamma^2 \cdot \log(\frac{\gamma}{\epsilon})} \right) \Big\} =&\\
&e^{- c(\epsilon) \cdot n \epsilon^2}, &
\end{align}
where $c(\epsilon)=\frac{\color{black}4\color{black} (\gamma_{\epsilon}-1)}{\gamma_{\epsilon}^2}$. 

Now, we need to repeat what we did in (\ref{eq:compensation_l}) by taking $\mathbb{E} [Y'_i] = q'_i$ with $q'_i =
q_i$ if $w_i \color{black} \leq \color{black}\tau$ and $q'_i = 0$ otherwise, so that we have
\begin{align}
g_l(\epsilon)={\mathbb
  E}[Y'] - {\mathbb E}[Y] 
= \sum_{i \in \cI} w_i (q'_i  - q_i) =  \sum_{i: w_i
  \color{black} \leq \color{black}a/n} w_i q_i - \sum_{i \in \cI} w_i q_i = \\
-\sum_{i: w_i \color{black}> \color{black}a/n} w_i q_i \geq -\sum_{i: w_i \color{black}> \color{black}a/n} w_i f(a) \geq -f(a).
\end{align}

\noindent As for upper bound on $C_{\mathcal{L}}=\sum_{i \in \mathcal{L}} c_i^2$, we have
\begin{align}
C_{\mathcal{L}}=\sum_{i: w_i \leq a/n} w_i^2 \leq \frac{a}{n} \cdot \sum_{i: w_i \leq a/n} w_i \leq \frac{a}{n} \cdot \sum_{i \in \cI} w_i \leq \frac{a}{n}.
\end{align}

\noindent Note that working with $C_{\mathcal{L}}$ again leads to a sharper bound for $\color{black}0.187\color{black}<\epsilon<1$. 

\section{\large Comparison of Bounds on Missing Mass for STD-sized Deviations}
\label{sec:analysis}
Our bounds do not sharpen the best known results if $\epsilon$ is large. However, for small $\epsilon$ our bounds become competitive as the number
of samples increase; let us now  compare our bounds (\ref{ud_linear}) and (\ref{ld_linear}) against the existing bounds for this case. 
Here, we focus on missing mass problem(s). We select \cite{Berend_Kontorovich_Bound} for our comparisons since those are the state-of-the-art.
We drop $\epsilon$ in the subscript of $\gamma_{\epsilon}$ in the analysis below.
Despite the fact that the exponent in our bounds is almost linear in $\epsilon$, one can consider the function $\phi$ in
(\ref{phi_func_l}) and imagine rewriting (\ref{ld_linear})
using a functional form that goes like $\exp(- c(.)' \ n {\epsilon}^2)$ instead. Then, the expression for $c'$ would become
\begin{align}
c'(\gamma, \epsilon) = \frac{\color{black}3 \color{black}\phi(\gamma, \epsilon)}{\color{black}8 \color{black}\epsilon} = \frac{3 (\gamma-1)^2}{8 {\gamma}^2 \epsilon \log(\gamma/\epsilon)}.
\end{align}
\noindent Now, remember that we were particularly interested in the case of STD-sized deviations. Since $c'$ is decreasing in $\epsilon$, 
for any $0< \frac{1}{n} < \epsilon < \frac{1}{\sqrt n} < \frac{1}{e}$  we have 
\begin{align}
c'(\gamma, n) \coloneqq
\frac{ 3 \sqrt n (\gamma-1)^2}{8 {\gamma}^2 \log(\sqrt n \gamma)} \leq
c'(\gamma, \epsilon) \leq
\frac{ 3 n (\gamma-1)^2}{8 {\gamma}^2 \log(n \gamma)}.
\end{align}
Optimizing for $\gamma \in D_{\gamma}$ gives 
\begin{align}
\inf_{\gamma} c'(\gamma, \epsilon) = \sup_{\gamma} c'(\gamma, n) =  \frac{ 3 \sqrt n (\gamma_n-1)}{4 {\gamma_n}^2} \color{black}\coloneqq \color{black}c'(n), \label{def_cn}
\end{align}
where $\gamma_n=-2W_{-1} \big(-\frac{1}{2 \sqrt{n e}} \big)$. Therefore, for STD-sized deviations, our 
bound in (\ref{ld_linear}) will resemble $e^{-c'(n) \cdot n \epsilon^2}$. 
We improve their constant for lower deviation which is $\approx 1.92$ as soon as $\color{black}n=1910$\color{black}. 
If we repeat the same procedure for (\ref{ud_linear}), 
it turns out that we also improve their constant for
upper deviation which is $1.0$ as soon as \color{black}$n=427$\color{black}.\\

\noindent Finally, if we plug in the definitions we can see that the following holds for the compensation gap 
\begin{align}
|g(\epsilon)| \leq \sqrt e \cdot \exp \Big( W_{-1}(\frac{-\epsilon}{2 \sqrt e}) \Big), \label{gap_eps}
\end{align}
where we have dropped the subscript of $g$.
It is easy to confirm that the gap is negligible in magnitude for small $\epsilon$ compared to large values of $\epsilon$ in the case of
(\ref{ud_linear}) and (\ref{ld_linear}). This observation supports the
fact that we obtain stronger bounds for small deviations. 

\color{black}

\section{Future Work}
\label{sec:discussion_futurework}
Note that using the notation in (\ref{mcdiarmid_C}), we have $\mathbb{V}[Z] \leq \frac{1}{2} \sum_{i \in \mathcal{S}} c_i^2$ [\cite{efron_stein}] which turns into equality
for sums of independent variables. We would like to obtain using this observation, in the cases where $f$ is any sum over its arguments, for any $\epsilon>0$ the following bounds
\begin{align}
\mathbb{P}(Z-\mathbb{E}[Z] > \epsilon) \leq \exp \Big(-C_3 \cdot \frac{{\epsilon}^2}{V[Z]} \Big) , \nonumber \\
\mathbb{P}(Z-\mathbb{E}[Z] < -\epsilon) \leq \exp \Big(-C_4 \cdot \frac{{\epsilon}^2}{V[Z]} \Big), \label{mcdiarmid_V}
\end{align}
where $V$ is a data-dependent variance-like quantity and $C_3$ and $C_4$ are constants. This can be thought of as a modification of McDiarmid's inequality (appendix \ref{sec:mcdiarmid})
which would then enable us improve our constansts and consequently further sharpen our bounds.

\noindent As future work, we would also like to apply our bounds to \cite{Generalization_Unseen05} so as to
analyze classification error on samples that have not been observed before (i.e. in the training set). 

\section*{Acknowledgement}
The author would like to thank Peter Grünwald who brought the challenge in the missing mass problem(s) to the author's attention, 
shared the initial sketch on how to approach the
problem and provided comments that helped improve an early version of the draft.

\section*{Appendix}

\appendix

\section{Proof of Inequality (\ref{eq:seq_decomp})}
\label{app:prove_decomp}
Assume without loss of generality that
$\cI_b$ has only one element corresponding to $Y_1$ and $\cJ_1=\{1,2\}$ and $k_1=1$ i.e. $w_1$ is split into two parts. Observe that 
deviation probability of $Y$ can be thought of as the total probability mass corresponding to independent Bernoulli variables
$Y_1,...,Y_N$ whose weighted sum is bounded below by some tail size $t$, namely
\begin{align}
\mathbb{P}(Y \geq t) &= \sum_{Y_1,...,Y_N; \ Y \geq t} P(Y_1,...,Y_N) \\
&= \sum_{Y_1,...,Y_N; \ Y''\geq t} \ R(Y_1) \cdot \prod_{i=2}^N R(Y_i) 
+ \sum_{Y_1,...,Y_N; \ Y''< t; \ Y \geq t} R(Y_1) \cdot \prod_{i=2}^N R(Y_i) \\
&= \sum_{Y_1,...,Y_N; \ Y''\geq t} \ R(Y_1) \cdot \prod_{i=2}^N R(Y_i) 
+ \sum_{Y_1,...,Y_N; \ Y''< t; \ Y' \geq t,Y_1 = 1} R(Y_1) \cdot \prod_{i=2}^N R(Y_i) \label{aa}\\
&= \sum_{Y_2,...,Y_N; \ Y''\geq t} \prod_{i=2}^N R(Y_i) 
+ \sum_{Y_1,...,Y_N; \ Y''< t; \ Y' \geq t} q_1 \cdot \prod_{i=2}^N R(Y_i), \label{last_term1}
\end{align}
where $Y''=\sum_{i \geq 2} w_iY_i$ and we have introduced
$R(Y_i)=q_i=q(w_i)$ if $Y_i=1$ and
$R(Y_i)=1-q_i$ if $Y_i=0$.  Here (\ref{aa})
follows because (a) if the condition $Y'' \leq t$ holds  for some $(Y_2,
\ldots, Y_N) = (y_2, \ldots, y_N)$ then clearly it still 
holds for $Y^N = (Y_1, y_2, \ldots, y_N)$ both for $Y_1 =
1$ and for $Y_1 = 0$, and (b) all $Y_1, \ldots, Y_N$ over which the
second sum is taken must clearly have $Y_1 = 1$ (otherwise the
condition $Y'' < t; Y'\geq t$ cannot hold).\\
\noindent Similarly, we can express the upper deviation probability of $Y'$ as
\footnotesize
\begin{align}
\mathbb{P}(Y' \geq t)
&= 
\sum_{Y_1,...,Y_N; \ Y''\geq t} \ R(Y_1) \cdot \prod_{i=2}^N R(Y_i) +
\sum_{Y_{11},Y_{12}, Y_2,...,Y_N; \ Y''< t; \ Y' \geq t}
\Big( R(Y_{11}) \cdot R(Y_{12}) \Big) \prod_{i=2}^N R(Y_i) \\
&= 
\sum_{Y_2,...,Y_N; \ Y''\geq t} \ \prod_{i=2}^N R(Y_i) +
\sum_{Y_{11},Y_{12}, Y_2,...,Y_N; \ Y''< t; \ Y' \geq t}
\Big( R(Y_{11}) \cdot R(Y_{12}) \Big) \prod_{i=2}^N R(Y_i) \\
&\geq  
\sum_{Y_2,...,Y_N; \ Y''\geq t} \ \prod_{i=2}^N R(Y_i) +
\sum_{Y_{11},Y_{12}, Y_2,...,Y_N; \ Y''< t; \ Y' \geq t, Y_{11} = 1, Y_{12} = 1}
\Big( R(Y_{11}) \cdot R(Y_{12}) \Big) \prod_{i=2}^N R(Y_i) \\ &= 
\sum_{Y_2,...,Y_N; \ Y''\geq t} \ \prod_{i=2}^N R(Y_i) +
\sum_{Y_{11},Y_{12}, Y_2,...,Y_N; \ Y''< t; \ Y' \geq t, Y_{11} = 1, Y_{12} = 1}
(q_{11} \cdot q_{12}) \prod_{i=2}^N R(Y_i), \label{last_term2}
\end{align}
\normalsize
where $R(Y_{ij})=q_{ij}=q^{\circ}(w_{ij})$ if $Y_{ij}=1$ and
$R(Y_{ij})=1-q_{ij}=1-q(w_{ij})$ otherwise.  Therefore, combining (\ref{last_term1}) and (\ref{last_term2}) we
have that
\begin{align}
  \mathbb{P}(Y' \geq t) - \mathbb{P}(Y \geq t) \geq
  \sum_{Y_1,...,Y_N; \ Y''< t; \ Y' \geq t} (q_{11} \cdot
  q_{12} - q_1) \prod_{i=2}^N R(Y_i). \label{eq:diff_prob}
\end{align}
In order to establish (\ref{eq:seq_decomp}), we require 
the expression for the difference between deviation probabilities in (\ref{eq:diff_prob}) to be non-negative for all \color{black}$t>0$ \color{black}which
holds when $q_1 \leq q_{11} \cdot q_{12}$ i.e. under condition $(b)$. For the missing mass
, condition $(b)$ holds. Suppose, without loss of generality, that $w_i$ is split into two terms; namely, we have $w_i=w_{ij} + w_{ij'}$. Then, one can verify the condition as follows
\begin{align}
 q(w_i) = (1- w_i)^n \leq (1-w_{ij})^n \cdot (1-w_{ij'})^n \nonumber \\
= {\Big( 1-\underbrace{(w_{ij}+w_{ij'})}_{w_i} + \underbrace{w_{ij} \cdot w_{ij'}}_{\geq 0} \Big)}^n.
\end{align}
The proof follows by induction. Finally, choosing tail size $t=\epsilon+\mathbb{E}Y$ implies the result. 

\section{Generalization to Unit Interval}
\label{generalization_to_bounded}
The following lemma shows that any result for mixture of independent
Bernoulli variables extends to mixture of independent
variables with smaller or equal means defined on the unit interval.
\begin{lemma} 
\upshape Let $\mathbb{S}$ be some countable set and consider independent random variables $\{Z_i\}_{i\in \mathbb{S}}$ that belong to $[0,1]$ with probability one and
independent Bernoulli random variables $\{{Z'}_i\}_{i\in \mathbb{S}}$ 
such that $\mathbb{E}[Z_i] \leq \mathbb{E}[{Z'}_i]$ almost surely for all $i\in \mathbb{S}$.
In addition, let $0 \leq \{w_i\}_{i \in \mathbb{S}} \leq 1$ be their associated weights. Then, the mixture random variable
$Z=\sum_{i\in \mathbb{S}} w_i Z_i$ is more concentrated than the mixture random variable $Z'=\sum_{i\in \mathbb{S}} w_i Z_i'$
for any such $Z$ and $Z'$.
\end{lemma}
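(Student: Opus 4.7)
The plan is to use the exponential moment method. The key observation is the classical convexity fact that the moment generating function of any $[0,1]$-valued random variable is dominated by the MGF of a Bernoulli with the same mean. Combined with the hypothesis $\mathbb{E}[Z_i]\leq\mathbb{E}[Z_i']$ and independence, this will yield MGF domination between $Z$ and $Z'$, which through Lemma~\ref{lemma:chernoff_technique} transfers every Chernoff-style tail bound proved for $Z'$ to $Z$ (this is the operational meaning of ``more concentrated'' needed for the corollary).

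For the upper tail, for every $i$ and every $\lambda\geq 0$ the convexity of $x\mapsto e^{\lambda w_i x}$ on $[0,1]$ together with $Z_i\in[0,1]$ a.s.\ gives the tangent-line inequality $e^{\lambda w_i Z_i}\leq(1-Z_i)+Z_i e^{\lambda w_i}$, which upon taking expectation yields $\mathbb{E}[e^{\lambda w_i Z_i}]\leq 1+\mathbb{E}[Z_i](e^{\lambda w_i}-1)$, i.e.\ the MGF of a Bernoulli of mean $\mathbb{E}[Z_i]$ weighted by $w_i$. Since $e^{\lambda w_i}-1\geq 0$ for $\lambda\geq 0$, the hypothesis $\mathbb{E}[Z_i]\leq\mathbb{E}[Z_i']$ pushes this further up to $\mathbb{E}[e^{\lambda w_i Z_i'}]$. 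Multiplying over $i\in\mathbb{S}$ using independence produces $\mathbb{E}[e^{\lambda Z}]\leq\mathbb{E}[e^{\lambda Z'}]$ for every $\lambda\geq 0$. Combining this with Chernoff and the fact that $\mathbb{E}[Z]\leq\mathbb{E}[Z']$ yields the claimed upper-tail deviation bound for $Z-\mathbb{E}[Z]$.

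For the lower tail I insert an auxiliary independent Bernoulli mixture $B=\sum_i w_i B_i$ with $\mathbb{E}[B_i]=\mathbb{E}[Z_i]$, so that the convexity bound now reads $\mathbb{E}[e^{\lambda w_i Z_i}]\leq\mathbb{E}[e^{\lambda w_i B_i}]$ with equality on the Bernoulli side, and consequently holds for \emph{every} $\lambda\in\mathbb{R}$ (not only $\lambda\geq 0$). Since $\mathbb{E}[B]=\mathbb{E}[Z]$, every Chernoff-style bound for $B-\mathbb{E}[B]$ transfers unchanged to $Z-\mathbb{E}[Z]$ in both tail directions. Because $B$ is itself a Bernoulli mixture with parameters $\mathbb{E}[Z_i]\leq q_i:=\mathbb{E}[Z_i']$, and because the proofs in sections~\ref{sec:ud_missing_mass}--\ref{sec:ld_missing_mass} only use the Bernoulli parameters as upper bounds (in the variance estimates $\mathbb{V}_{\mathcal{U}},\mathbb{V}_{\mathcal{L}}\leq\frac{a}{n}\cdot\epsilon$, in condition~$(a)$, and in the compensation terms $g_u,g_l\leq f(a)$), the deviation bounds proved for $Z'$ apply verbatim to $B$, and hence to $Z$.

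The main obstacle is the asymmetry between the two tails. A naive direct comparison $\mathbb{E}[e^{\lambda Z}]\leq\mathbb{E}[e^{\lambda Z'}]$ breaks for $\lambda<0$: then $e^{\lambda w_i}-1\leq 0$ and the hypothesis $\mathbb{E}[Z_i]\leq\mathbb{E}[Z_i']$ flips, bounding the MGF of $Z$ by a quantity \emph{larger} than the MGF of $Z'$. The detour through the mean-matched Bernoulli surrogate $B$ circumvents this sign issue cleanly, at the price of having to re-read the proof of Theorem~\ref{main_theorem} with $q_i$ interpreted as an upper bound on the component means rather than as their exact value; verifying this reinterpretation is immediate from how $q_i$ enters that proof, and it is where the most care is needed.
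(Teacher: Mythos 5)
Your upper-tail argument is exactly the paper's: the tangent-line (convexity) bound $e^{\lambda w_i Z_i}\leq(1-Z_i)+Z_ie^{\lambda w_i}$, expectation, the sign of $e^{\lambda w_i}-1$ for $\lambda\geq 0$ to invoke $\mathbb{E}[Z_i]\leq\mathbb{E}[Z_i']$, and independence to multiply the per-coordinate MGF bounds into $\mathbb{E}[e^{\lambda Z}]\leq\mathbb{E}[e^{\lambda Z'}]$ before applying Chernoff. Where you differ is that you noticed what the paper silently skips: its proof fixes $\lambda>0$ throughout (the annotation $(1-e^{\lambda})<0$ even depends on it), so it only establishes the one-sided MGF domination and hence only the upper tail, even though the lemma's conclusion (``more concentrated'') and the corollary it supports are two-sided. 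Your detour through the mean-matched Bernoulli surrogate $B_i$ with $\mathbb{E}[B_i]=\mathbb{E}[Z_i]$ is the right repair: the tangent-line bound becomes an identity on the Bernoulli side, valid for all $\lambda\in\mathbb{R}$, and since $\mathbb{E}[B]=\mathbb{E}[Z]$ both centered tails transfer without a mean shift; you then correctly observe that the paper's deviation proofs use the $q_i$ only as upper bounds (in condition $(a)$, in the variance chain ending at $\frac{a}{n}e^{-a}$, and in the compensation terms), so they apply to $B$ with parameters $\mathbb{E}[Z_i]\leq q_i$. One shared imprecision worth flagging (it afflicts the paper's own proof at least as much as yours): for the upper tail the comparison $\mathbb{E}[e^{\lambda Z}]\leq\mathbb{E}[e^{\lambda Z'}]$ is between \emph{uncentered} MGFs while the target events are centered at the respective means, so transferring a bound on $Z'-\mathbb{E}[Z']$ to $Z-\mathbb{E}[Z]$ costs a shift of $\mathbb{E}[Z']-\mathbb{E}[Z]\geq 0$ in the deviation; your surrogate construction actually removes this slack for both tails, which is a second small improvement over the source.
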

\begin{proof}
 For any fixed $t>0$ and for $\lambda>0$, applying Chernoff's method to any non-negative random variable $Z$ we obtain
\begin{align}
 \mathbb{P}(Z\geq t) \leq \frac{\mathbb{E}[e^{\lambda Z}]}{e^{\lambda t}}. \label{Chernoff}
\end{align}
Now, observe that for any convex real-valued function $f$ with $D_f=[0,1]$, we have that $f(z) \leq (1-z)f(0)+zf(1)$ for
any $z \in D_f$. Therefore, with $f$ chosen to be the exponential function, for all $i \in \cI$ we will have the following
\begin{align}
 \mathbb{E}[e^{\lambda w_i Z_i}] \leq \mathbb{E}[(1-w_i Z_i)+w_i e^{\lambda w_i} Z_i] \leq \\
 (1-\mathbb{E}[Z'_i])+e^{\lambda w_i}\mathbb{E}[Z_i'] +
 \underbrace{(1-e^{\lambda})}_{<0} \underbrace{(1-w_i)}_{\geq 0} \underbrace{\mathbb{E}[Z_i']}_{>0} \leq \\
 (1-\mathbb{E}[Z'_i])+e^{\lambda w_i}\mathbb{E}[Z_i'] = \mathbb{E}[e^{\lambda w_i Z_i'}].
 \label{weighted_bernoulli}
\end{align}
Note that we can apply Chernoff to $Z'$ as well. In order to complete the proof, it is sufficient to establish
that the RHS of (\ref{Chernoff}) is smaller for $Z$ compared to $Z'$. This follows since we have
\begin{align}
 \mathbb{E}[e^{\lambda Z}] = \mathbb{E}[\prod_{i\in \cI} e^{\lambda w_i Z_i}] 
 = \prod_{i\in \cI} \mathbb{E}[e^{\lambda w_i Z_i}] \label{indep_assump} \\
 \leq \prod_{i\in \cI} \mathbb{E}[e^{\lambda w_i Z_i'}] =
 \mathbb{E}[\prod_{i\in \cI} e^{\lambda w_i Z_i'}] = \mathbb{E}[e^{\lambda Z'}]. \label{convexity}
\end{align}
Here, (\ref{indep_assump}) follows because of 
independence whereas the inequality in (\ref{convexity}) is due to convexity just as concluded in (\ref{weighted_bernoulli}). Finally,
the last step holds again since the variables are independent.
\end{proof}

\begin{theorem} \ {\bf [Bernstein]} \label{thm:bernstein} \upshape
Let $Z_1, ..., Z_N$ be independent zero-mean random variables such that $|Z_i| \leq \alpha$ almost surely for 
all $i$. Then, using Bernstein's inequality (\cite{Bernstein_ineq}) one obtains for all $\epsilon>0$:
\begin{equation}\label{eq:bernsteina}
\mathbb{P} ( \sum_{i=1}^{N} Z_i > \epsilon )
 \leq \exp \Big(-\frac{{\epsilon}^2}{2(V + \frac{1}{3} \alpha \epsilon)}\Big),
\end{equation}
where $V=\sum_{i=1}^{N}\mathbb{E}[{Z_i}^2]$. 
\end{theorem}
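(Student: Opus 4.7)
The plan is to follow the classical exponential-moment (Chernoff) argument. First, I would invoke Markov's inequality on $e^{\lambda \sum_i Z_i}$: for any $\lambda > 0$,
\[
\mathbb{P}\Big(\sum_{i=1}^N Z_i > \epsilon\Big) \leq e^{-\lambda \epsilon}\, \mathbb{E}\Big[e^{\lambda \sum_i Z_i}\Big] = e^{-\lambda \epsilon} \prod_{i=1}^N \mathbb{E}[e^{\lambda Z_i}],
\]
where the last equality uses the assumed independence of the $Z_i$. The task therefore reduces to bounding each individual moment generating function $\mathbb{E}[e^{\lambda Z_i}]$.

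For this step, I would expand the exponential as a power series and exploit $|Z_i| \leq \alpha$ to bound $|Z_i|^k \leq \alpha^{k-2} Z_i^2$ for $k \geq 2$. Using $\mathbb{E}[Z_i] = 0$ and taking expectations term-by-term yields
\[
\mathbb{E}[e^{\lambda Z_i}] \leq 1 + \frac{\mathbb{E}[Z_i^2]}{\alpha^2}\bigl(e^{\lambda \alpha} - 1 - \lambda \alpha\bigr).
\]
Combining $1 + x \leq e^x$ with independence, the product over $i$ is at most $\exp\bigl(\frac{V}{\alpha^2}(e^{\lambda \alpha} - 1 - \lambda \alpha)\bigr)$, which is essentially Bennett's exponent.

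The step that produces Bernstein's particular functional form is the elementary rational majorant
\[
e^x - 1 - x \leq \frac{x^2/2}{1 - x/3}, \qquad 0 \leq x < 3,
\]
which one proves by cross-multiplying and comparing Taylor coefficients, using $k! \geq 2 \cdot 3^{k-2}$ for $k \geq 2$. Applied with $x = \lambda \alpha$, the Chernoff bound becomes
\[
\mathbb{P}\Big(\sum_{i=1}^N Z_i > \epsilon\Big) \leq \exp\!\Big(-\lambda \epsilon + \frac{\lambda^2 V/2}{1 - \lambda \alpha/3}\Big).
\]

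Finally, I would optimize the right-hand side over $\lambda \in (0, 3/\alpha)$. The standard choice $\lambda^{\star} = \epsilon/(V + \alpha \epsilon/3)$ is feasible in this range and makes the exponent collapse to $-\epsilon^2/[2(V + \alpha \epsilon/3)]$, delivering exactly the stated bound. The only nontrivial analytic step, and thus the main obstacle, is the verification of the rational majorant for $e^x - 1 - x$; every other ingredient (Markov, independence, term-by-term moment bounds, and the explicit minimization in $\lambda$) is mechanical once that inequality is established.
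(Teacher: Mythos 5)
Your proof is correct and complete: it is the standard Chernoff--Bernstein argument (Markov on the exponential moment, the term-by-term bound $\mathbb{E}[|Z_i|^k]\leq \alpha^{k-2}\mathbb{E}[Z_i^2]$, the rational majorant $e^x-1-x\leq \tfrac{x^2/2}{1-x/3}$ via $k!\geq 2\cdot 3^{k-2}$, and the substitution $\lambda^{\star}=\epsilon/(V+\alpha\epsilon/3)$, which is feasible and does yield exactly $-\epsilon^2/[2(V+\tfrac{1}{3}\alpha\epsilon)]$). The paper itself gives no proof of this theorem --- it is imported by citation as a classical result --- so there is nothing to compare against; your derivation is precisely the one the cited source supplies.
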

\noindent Now if we consider the sample average 
$\bar{Z} = n^{-1}\sum_{i=1}^n Z_i$, and let $\bar{\sigma}^2$ be the
average sample variance of the $Z_i$, i.e. $\bar{\sigma}^2 := n^{-1} \sum_{i=1}^n \text{\sc var\;}[{Z_i}]=n^{-1} \sum_{i=1}^n E[{Z_i}^2]$. 
Using (\ref{eq:bernsteina})
with $n \cdot \epsilon$ in the role of
$\epsilon$, we get 
\begin{equation}\label{eq:bernsteinb}
\mathbb{P} ( \bar{Z} > \epsilon )
 \leq \exp \Big(-\frac{n {\epsilon}^2}{2(\bar{\sigma}^2 + \frac{1}{3} \alpha \epsilon)}\Big).
\end{equation}
If $Z_i$s are, moreover, not just independent but also
identically distributed, then $\bar{\sigma}^2$ is equal to $\sigma^2$ i.e.
the variance of $Z$. The latter presentation makes explicit: (1) the exponential decay with $n$; (2) the fact
that for $\bar{\sigma}^2 \leq \epsilon$ we get a tail probability with
exponent of order $n \epsilon$ rather than $n \epsilon^2$ \cite{Lugosi_Concentration,boucheron2013concentration} 
which yields stronger bounds for small $\epsilon$.

\section{McDiarmid's Inequality}
\label{sec:mcdiarmid}
\begin{theorem} \ {\bf [McDiarmid]} \label{thm:mcdiarmid} \upshape
Let $X_1, ..., X_m$ be independent random variables belonging to some set $\mathcal{X}$ and let $f: \ {\mathcal{X}}^m \rightarrow \mathbb{R}$
be a measurable function of these variables. Introduce independent shadow variables $X_1', ..., X_m'$ as well as the notations
$Z=f(X_1,...,X_{i-1},X_i,X_{i+1},...,X_m)$ and $Z_i'=f(X_1,...,X_{i-1},X_i',X_{i+1},...,X_m)$. 
Suppose that for all $i \in \mathcal{S}$ (with $|\mathcal{S}|=m$) and for all realizations $x_1,...,x_m,x_i' \in \mathcal{X}$, $f$ satisfies 
\begin{align}
\lvert z - z_i'\rvert = \lvert f(x_1,...,x_{i-1},x_i,x_{i+1},...,x_m)-f(x_1,...,x_{i-1},x_i',x_{i+1},...,x_m)\rvert \leq c_i.
\end{align}
Setting $C = \sum_{i \in \mathcal{S}} c_i^2$, for any $\epsilon>0$ one obtains  [\cite{mcdiarmid89}]
\begin{align}
&\mathbb{P}(Z-\mathbb{E}[Z] > \epsilon) \leq \exp \Big(-\frac{2 {\epsilon}^2}{C} \Big),& \nonumber \\
&\mathbb{P}(Z-\mathbb{E}[Z] < -\epsilon) \leq \exp \Big(-\frac{2 {\epsilon}^2}{C} \Big).& \label{mcdiarmid_C}
\end{align}
\end{theorem}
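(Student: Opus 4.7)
The plan is to prove McDiarmid's inequality via the classical Doob martingale route combined with Hoeffding's lemma. First, I would introduce the filtration $\mathcal{F}_0 = \{\emptyset, \Omega\} \subset \mathcal{F}_1 \subset \cdots \subset \mathcal{F}_m$ with $\mathcal{F}_i = \sigma(X_1,\ldots,X_i)$, and the Doob martingale $M_i = \mathbb{E}[Z \mid \mathcal{F}_i]$, so that $M_0 = \mathbb{E}[Z]$ and $M_m = Z$. Writing $V_i = M_i - M_{i-1}$ yields a telescoping decomposition $Z - \mathbb{E}[Z] = \sum_{i=1}^m V_i$ into martingale differences, with $\mathbb{E}[V_i \mid \mathcal{F}_{i-1}] = 0$.

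Second, I would translate the bounded differences hypothesis into a range bound on each $V_i$ conditional on the past. Using independence of the $X_j$, one can write $M_i = g_i(X_1,\ldots,X_i)$ where
$$g_i(x_1,\ldots,x_i) = \mathbb{E}\bigl[f(x_1,\ldots,x_i,X_{i+1},\ldots,X_m)\bigr].$$
Setting $A_i = \inf_{y \in \mathcal{X}} g_i(X_1,\ldots,X_{i-1},y) - M_{i-1}$ and $B_i = \sup_{y \in \mathcal{X}} g_i(X_1,\ldots,X_{i-1},y) - M_{i-1}$, the hypothesis $|f(\cdots,x_i,\cdots) - f(\cdots,x_i',\cdots)| \leq c_i$ passes through the expectation over the remaining coordinates (apply $|\mathbb{E}[U]-\mathbb{E}[U']| \leq \sup|U-U'|$), giving $B_i - A_i \leq c_i$ and $A_i \leq V_i \leq B_i$ almost surely.

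Third, I would apply Hoeffding's lemma (a zero-mean random variable supported in $[a,b]$ has MGF bounded by $e^{\lambda^2(b-a)^2/8}$) to $V_i$ conditional on $\mathcal{F}_{i-1}$, which yields the pointwise bound $\mathbb{E}[e^{\lambda V_i} \mid \mathcal{F}_{i-1}] \leq e^{\lambda^2 c_i^2 / 8}$. Peeling the conditioning tower from $i=m$ down to $i=1$ produces
$$\mathbb{E}\bigl[e^{\lambda(Z - \mathbb{E}[Z])}\bigr] = \mathbb{E}\Bigl[e^{\lambda \sum_i V_i}\Bigr] \leq \exp\!\Bigl(\tfrac{\lambda^2}{8} \sum_{i=1}^m c_i^2\Bigr) = e^{\lambda^2 C / 8}.$$
A Markov/Chernoff step $\mathbb{P}(Z-\mathbb{E}[Z] > \epsilon) \leq e^{-\lambda\epsilon} \mathbb{E}[e^{\lambda(Z-\mathbb{E}[Z])}]$ optimized at $\lambda = 4\epsilon / C$ then gives $e^{-2\epsilon^2/C}$. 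The lower tail bound follows by applying the identical argument to $-f$, which satisfies the same bounded differences condition with the same constants $c_i$.

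The main obstacle is the second step, where the range of $V_i$ conditional on $\mathcal{F}_{i-1}$ must be controlled uniformly by the \emph{deterministic} constant $c_i$: although $A_i$ and $B_i$ are themselves $\mathcal{F}_{i-1}$-measurable random variables, one needs $B_i - A_i \leq c_i$ to hold pointwise so that the conditional Hoeffding bound $e^{\lambda^2 c_i^2 / 8}$ can be pulled out of the outer expectation as a constant. This pointwise range control ultimately rests on independence of $X_i$ from $\mathcal{F}_{i-1}$, which allows the $(i-1)$-th conditional expectation inside $g_i$ to average only over the future variables $X_{i+1},\ldots,X_m$; once that is verified, the remaining manipulations are routine exponential-moment calculus.
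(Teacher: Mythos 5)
Your proof is correct. Note that the paper does not actually prove Theorem \ref{thm:mcdiarmid}: it is stated as background in the appendix and quoted with a citation to \cite{mcdiarmid89}, so there is no in-paper argument to compare against; your Doob-martingale route is the standard proof (essentially McAllester-era textbook form of McDiarmid's original argument). You also get right the one point where a careless version loses the constant: bounding the conditional range of $V_i$ via the $\mathcal{F}_{i-1}$-measurable endpoints $A_i, B_i$ with $B_i - A_i \leq c_i$, rather than using the cruder two-sided bound $|V_i| \leq c_i$, is exactly what gives $\mathbb{E}[e^{\lambda V_i} \mid \mathcal{F}_{i-1}] \leq e^{\lambda^2 c_i^2/8}$ and hence the sharp exponent $-2\epsilon^2/C$ (the cruder bound would only yield $-\epsilon^2/(2C)$). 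The single step you gloss over is the measurability of $\sup_{y}$ and $\inf_{y}$ over an abstract $\mathcal{X}$, which is a routine technicality (and immaterial for the countable index sets used elsewhere in this paper); everything else, including the Chernoff optimization at $\lambda = 4\epsilon/C$ and the lower tail via $-f$, checks out.
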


\bibliography{missing_mass}
\bibliographystyle{plainnat}
\end{document}